\newtheorem{theorem}{Theorem}[section]
\newtheorem{proposition}[theorem]{Proposition}
\newtheorem{lemma}[theorem]{Lemma}
\newtheorem{corollary}[theorem]{Corollary}
\newtheorem{assumption}[theorem]{Assumption}
\newtheorem{remark}[theorem]{Remark}
\title{Graph Feedback Bandits with Similar Arms}
\author[1]{Han Qi}
\author[1]{Guo Fei}
\author[1]{Li Zhu}
\affil[1]{School of Software Engineering, Xi'an Jiaotong University, 
Xi'an, Shaanxi, China}
\begin{document}
\maketitle

\begin{abstract}

In this paper, we study the stochastic multi-armed bandit problem with graph feedback. Motivated by the clinical trials and recommendation problem, we assume that two arms are connected if and only if they are  similar (i.e., their means are close enough). We establish a regret lower bound for this novel feedback structure and introduce two UCB-based algorithms: D-UCB with problem-independent regret upper bounds and C-UCB with problem-dependent upper bounds. Leveraging the similarity structure, we also consider the scenario  where the number of arms increases over time. Practical applications related to this scenario include Q\&A platforms (Reddit, Stack Overflow, Quora) and product reviews in Amazon and Flipkart.  Answers (product reviews) continually appear on the website, and the goal is to display the best answers (product reviews) at the top.
When the means of arms are independently generated from some distribution, we provide regret upper bounds for both algorithms  and discuss the sub-linearity of bounds in relation to the distribution of means. Finally, we conduct experiments to validate the theoretical results. 


\end{abstract}

\section{Introduction}
\label{Introduction}
The multi-armed bandit is a classical reinforcement learning problem. At each time step, the learner needs to select an arm. This will yield a reward drawn from a probability distribution (unknown to the learner). The learner's goal is to maximize the cumulative reward over a period of time steps. This problem has attracted attention from the online learning community because of its effective balance between exploration (trying out as many arms as possible) and exploitation (utilizing the arm with the best current performance). A number of applications of multi-armed bandit can be found in online sequential decision problems, such as online recommendation systems \citep{li2011unbiased}, online advertisement campaign \citep{schwartz2017customer} and clinical trials \citep{villar2015multi,aziz2021multi}. 

In the standard multi-armed bandit, the learner can only observe the reward of the chosen arm. There has been existing research \citep{mannor2011bandits,caron2012leveraging,hu2020problem,lykouris2020feedback} that considers the bandit problem with side observations, wherein the learner can observe information about arms other than the selected one. This observation structure can be encoded as a graph, each node represents an arm. Node $i$ is linked to node $j$ if selecting $i$ provides information on the reward of $j$. 

We study a new feedback model: if any two arms are $\epsilon$-similar, i.e., the absolute value of the difference between the means of the two arms does not exceed $\epsilon$, an edge will form between them. This means that after observing the reward of one arm, the decision-maker simultaneously knows the rewards of arms similar to it. If $\epsilon=0$, this feedback model is the standard multi-armed bandit problem. If $\epsilon$ is greater than the maximum expected reward, this feedback model turns out to be the full information bandit problem.

As a motivating example, consider the recommendation problem in Spotify and Apple Music. After a recommender suggests a song to a user, they can observe not only whether the user liked or saved the song (reward), but also infer that the user is likely to like or save another song that is very similar. Similarity may be based on factors such as the artist, songwriter, genre, and more.
As another motivating example, consider the problem of medicine clinical trials mentioned above. Each arm represents a different medical treatment plan, and these plans may have some similarities such as dosage, formulation, etc.  When a researcher selects a plan,  they not only observe the reward of that treatment plan but also know the effects of others similar to the selected one. The treatment effect (reward)  can be either some summary information or a relative effect, such as positive or negative.
Similar examples also appear in chemistry molecular simulations \citep{perez2020adaptivebandit}.

In this paper, we consider two bandit models: the standard graph feedback bandit problem and the bandit problem with an increasing number of arms. The latter is a more challenging setting than the standard multi-armed bandit. Relevant applications for this scenario encompass Q\&A platforms such as Reddit, Stack Overflow, and Quora, as well as product reviews on websites like Amazon and Flipkart. Answers or product reviews continuously populate these platforms means that the number of arms increases over time. The goal is to display the best answers or product reviews at the top.
This problem has been previously studied and referred to as "ballooning multi-armed bandits" by \citet{ghalme2021ballooning}. However, they require the optimal arm is more likely to arrive in the early rounds.
Our contributions are as follows: 
\begin{enumerate}

\item We propose a new feedback model, where an edge is formed when the means of two arms is less than some constant $\epsilon$. We  first analyze the underlying graph $G$ of this feedback model and establish that  the dominant number $\gamma(G)$ is equal to the minimum size of an independent dominating set $i(G)$, while the independent number $\alpha(G)$ is not greater than twice the dominant number $\gamma(G)$,i.e. $\gamma(G)=i(G) \geq \alpha(G)/2$.
This result is helpful to the design and analysis of the following algorithms.

\item In this feedback setting, we first establish a problem-dependent regret lower bound related to $\gamma(G)$. Then, we introduce two algorithms tailored for this specific feedback structure: Double-UCB (D-UCB), utilizing twice UCB algorithms sequentially, and Conservative-UCB (C-UCB), employing a more conservative strategy during the exploration.  Regret upper bounds are provided for both algorithms, with D-UCB obtaining a problem-independent bound, while C-UCB achieves a problem-dependent regret bound. 
Additionally, we analyze the regret bounds of  UCB-N \citep{lykouris2020feedback}  applied to our proposed setting  and prove that  its regret upper bound shares the same order as  D-UCB. 

\item We extend D-UCB and C-UCB to the scenario where the number of arms increases over time. Our algorithm does not require the optimal arm to arrive early but assumes that the means of each arrived arm are independently sampled from some distribution, which is more realistic. We provide the regret upper bounds for D-UCB and C-UCB, along with a simple regret lower bound for D-UCB. The lower bound of D-UCB indicates that it achieves sublinear regret only when the means are drawn from a normal-like distribution, while it incurs linear regret for means drawn from a uniform distribution. In contrast, C-UCB can achieve a problem-dependent sublinear regret upper bound regardless  of the means distribution.
\end{enumerate}

\section{Related Works}

Bandits with side observations was first introduced by \citet{mannor2011bandits} for adversarial settings. They proposed two algorithms: ExpBan, a hybrid algorithm combining expert and bandit algorithms based on clique decomposition of the side observations graph; ELP, an extension of the well-known EXP3 algorithm \citep{auer2002nonstochastic}.
\citet{caron2012leveraging,hu2020problem}  considered stochastic bandits with side observations. They proposed the UCB-N, UCB-NE, and TS-N algorithms, respectively. The regret upper bounds they obtain are of the form $ \sum_{c \in \mathcal{C}}\frac{\max_{i \in c}\Delta_i \ln(T)}{(\min_{i \in c}\Delta_i)^2} $, where $\mathcal{C}$ is the clique covering of the side observation graph. 
 
There has been some works that employ techniques beyond clique partition.  \citet{buccapatnam2014stochastic,buccapatnam2018reward} proposed the algorithm named UCB-LP, which combine a version of eliminating arms \citep{even2006action} suggested by
\citet{auer2010ucb} with linear programming to incorporate the graph structure. This algorithm has a regret guarantee of $\sum_{i\in D}\frac{\ln(T)}{\Delta_i}+K^2$, where $D$ is a particularly selected dominating set, $K$ is the number of arms. \citet{cohen2016online} used a method based on  elimination and provides the regret upper bound  as $\tilde{O}(\sqrt{\alpha T}) $, where $\alpha$ is the independence number of the underlying graph. \citet{lykouris2020feedback} utilized a hierarchical approach inspired by elimination to analyze the feedback graph, demonstrating that UCB-N and TS-N have regret bounds of order $\tilde{O}(\sum_{i \in I}\frac{1}{\Delta_i})$, where $I$ is an independent set of the graph.  There is also some work that considers the case where the feedback graph is a random graph \citep{alon2017nonstochastic,ghari2022online,esposito2022learning}.

Currently, there is limited research considering scenarios where the number of arms can change. \citet{chakrabarti2008mortal} was the first to explore this dynamic setting. Their model assumes that each arm has a lifetime budget, after which it automatically disappears, and will be replaced by a new arm. Since the algorithm needs to continuously explore newly available arms in this setting, they only provided the upper bound of the mean regret per time step. \citet{ghalme2021ballooning} considered the "ballooning multi-armed bandits" where the number of arms will increase but not disappear. They show that  the regret grows linearly without any distributional assumptions on the arrival
of the arms' means. With the assumption that the optimal arm arrives early with high probability, their proposed algorithm BL-MOSS can achieve sublinear regret. 
In this paper, we also only consider the "ballooning" settings but without the assumption on the optimal arm's arrival pattern. We use the feedback graph model mentioned above and assume that the means of each arrived arm are independently sampled from some distribution.

Clustering bandits \cite{gentile2014online,li2016art,yang2022optimal,wang2024online} are also relevant to our work. Typically, these models assume that a set of arms (or items) can be classified into several unknown groups. Within each group, the observations associated to  each of the arms follow the same distribution with the same mean. However, we do not employ a defined concept of clustering groups. Instead, we establish connections between arms by forming an edge only when their means are less than a threshold $\epsilon$, thereby creating a graph feedback structure.

\section{Problem Formulation}
\subsection{Graph Feedback with Similar Arms}
We consider a stochastic $K$-armed bandit problem with an undirected feedback graph and time horizon $T$ ($K \leq T$). At each round $t$, the learner select arm $i_t$, obtains a reward $X_t(i_t)$.  Without losing generality, we assume that the rewards are bounded in $[0,1]$ or $\frac{1}{2}$-subGaussian\footnote{This is simply to provide a uniform description of both bounded rewards and subGaussian rewards. Our results can be easily extended to other subGaussian distributions.}.  The expectation of $X_t(i)$ is denoted as $ \mu(i)=\mathbb{E}[X_t(i)] $. 
Graph $G=(V,E)$ denotes the underlying graph that captures all the feedback relationships over the arms set $V$. An edge $i \leftrightarrow j$ in $E$ means that $i$ and $j$ are $\epsilon$-similarty, i.e., 
 \[|\mu(i)-\mu(j)| < \epsilon,\] 
where $\epsilon$ is some constant greater than $0$. The learner can get a side observation of arm $i$ when pulling arm $j$, and vice versa. Let $N_i$ denote the observation set of arm $i$ consisting of $i$ and its neighbors in $G$. Let $k_t(i)$ and $O_t(i)$ denote the number of pulls and observations of arm $i$ till time $t$ respectively.  We assume that each node in graph $G$ contains a self-loops, i.e., the learner will observe the reward of the pulled arm.  

Let $i^{*}$ denote the expected reward of the optimal arm, i.e., $\mu(i^{*})=\max_{i \in \{1,...,K\}}\mu(i)$. The gap between the expectation of the optimal arm and the suboptimal arm is denoted as $\Delta_i=\mu(i^{*})-\mu(i).$  A policy, denoted as $\pi$,  that selects arm $i_t$ to play at time step $t$ based on the history plays and rewards.
The performance of the policy $\pi$ is measured as
\begin{equation}
    R_T^{\pi}= \mathbb{E} \left[ \sum_{t=1}^{T}\mu(i^{*})-\mu(i_t) \right]. 
\end{equation}

\subsection{Ballooning bandit setting}

This setting is the same as the graph feedback with similar arms above, except that the number of arms is increased over time. Let $K(t)$ denote the set of  available arms at round $t$. We consider a tricky case that a new arm will arrive at each round, the total number of arms is $|K(T)|=T$.
We assume that the means of each arrived arms are independently sampled from some distribution $\mathcal{P}$. Let $a_t$ denote the arrived arm at round $t$,
\[ \mu(a_1),\mu(a_2),...,\mu(a_T) \stackrel{i.i.d.}{\sim} \mathcal{P} .\]   
The newly arrived arm may be connected to  previous arms, depending on whether their means satisfy the $\epsilon$-similarity. In this setting, the optimal arm may vary over time. Let $i_t^{*}$ denote the expected reward of the optimal arm, i.e., $\mu(i_t^{*})=\max_{i \in K(t)}\mu(i) $. The regret is given by 
\begin{equation}
    R_{T}^{\pi}(\mathcal{P})= \mathbb{E} \left[ \sum_{t=1}^{T}\mu(i_t^{*})-\mu(i_t) \right]. 
\end{equation}

\section{Stationary Environments}
In this section, we consider the graph feedback with similar arms in stationary environments, i.e., the number of arms remains constant. We first analyze the structure of the feedback graph. Then, we provide a problem-dependent regret lower bound. Following that, we introduce the D-UCB and C-UCB algorithm and provide the regret upper bounds.

{\bfseries{Dominating and Independent Dominating Sets.}} 
A dominating set \(S\) in a graph \(G\) is a set of vertices such that every vertex not in \(S\) is adjacent to a vertex in \(S\). The domination number of \(G\), denoted as \(\gamma(G)\), is the smallest size of a dominating set.
    
An independent  set contains vertices that are not adjacent to each other. An independent dominating set in \(G\) is a set that both dominates and is independent. The independent domination number of \(G\), denoted as \(i(G)\), is the smallest size of such a set. The independence number of \(G\), denoted as \(\alpha(G)\), is the largest size of an independent set in \(G\). For a general graph $G$, it follows immediately that
$ \gamma(G) \leq i(G) \leq \alpha(G) $.


\begin{proposition}
    \label{pro1}
Let $G$ denote the feedback graph with similar arms setting, we have $\gamma(G)=i(G) \geq \frac{\alpha(G)}{2}.  $ 
\end{proposition}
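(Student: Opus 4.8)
The plan is to exploit the concrete geometry of $G$. After sorting the arms so that $\mu(1)\le\mu(2)\le\cdots\le\mu(K)$, the rule $i\leftrightarrow j \iff |\mu(i)-\mu(j)|<\epsilon$ makes $G$ an indifference (unit-interval) graph on the real line: an arm $d$ dominates exactly those arms whose means lie in the open window $(\mu(d)-\epsilon,\mu(d)+\epsilon)$. I would therefore read ``dominating set'' as ``a choice of such windows covering all the means'' and work entirely with these windows.

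To get $\gamma(G)=i(G)$, the idea is to build one set that is \emph{simultaneously} independent, dominating, and of minimum size. I would sweep left to right: repeatedly take the leftmost still-undominated arm $p$ and add the arm $d$ of \emph{largest} mean still adjacent to $p$ (so $\mu(p)\le\mu(d)<\mu(p)+\epsilon$). Writing the chosen arms as $d_1<\cdots<d_k$ and the successive leftmost-undominated arms as $p_1<\cdots<p_k$, two easy observations are that (a) since $p_{j+1}$ is left undominated by $d_j$ one has $\mu(p_{j+1})\ge\mu(d_j)+\epsilon$, so all chosen arms differ pairwise by at least $\epsilon$ and the set is independent; and (b) the sweep only halts once nothing is undominated, so the set is dominating.

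The hard part, and the main obstacle, is minimality: I must show every dominating set $A$ has $|A|\ge k$. The naive ``each $p_j$ needs its own dominator'' bound fails, because a single arm can sit within $\epsilon$ of two consecutive $p_j$'s — this is precisely the slack that will later cost a factor of $2$. The fix exploits that $d_j$ was pushed as far right as possible: any dominator of $p_j$ in $A$ has mean $\le\mu(d_j)$ (it lies within $\epsilon$ of $p_j$, and $d_j$ is the rightmost such arm), whereas any dominator of $p_{j'}$ with $j'>j$ has mean $>\mu(p_{j'})-\epsilon\ge\mu(p_{j+1})-\epsilon\ge\mu(d_j)$. Hence the dominators $A$ assigns to $p_1,\dots,p_k$ are forced to be pairwise distinct, giving $|A|\ge k$. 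Together with (a)--(b) this yields $\gamma(G)=k\ge i(G)\ge\gamma(G)$, so $\gamma(G)=i(G)=k$.

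Finally, for $i(G)\ge\alpha(G)/2$ I would prove the equivalent $\alpha(G)\le2\gamma(G)$. Fix a maximum independent set $I$ and a minimum dominating set $D$, and send each $v\in I$ to some arm of $D$ that dominates it; it suffices to bound each fibre by $2$. Every arm dominated by a fixed $d$ has mean in the length-$2\epsilon$ window $(\mu(d)-\epsilon,\mu(d)+\epsilon)$. Splitting this into the two half-windows $(\mu(d)-\epsilon,\mu(d)]$ and $(\mu(d),\mu(d)+\epsilon)$, each of length $\epsilon$, any two means landing in the same half differ by strictly less than $\epsilon$ and hence label adjacent arms — impossible inside the independent set $I$. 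By pigeonhole $d$ receives at most one arm of $I$ from each half, so at most two overall, giving $|I|\le2|D|$, i.e. $\alpha(G)\le2\gamma(G)=2\,i(G)$, which is the claimed inequality.
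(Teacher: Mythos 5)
Your proof is correct, but it reaches the key identity $\gamma(G)=i(G)$ by a genuinely different route than the paper. The paper argues non-constructively: it shows $G$ is claw-free (if a vertex $a$ had three pairwise non-adjacent neighbors, all their means would lie within $\epsilon$ of $\mu(a)$, so two of them would fall on the same side of $\mu(a)$ and hence be within $\epsilon$ of each other, a contradiction) and then invokes the cited theorem of Allan and Laskar that claw-free graphs satisfy $\gamma=i$. You instead exploit the indifference-graph geometry directly: your left-to-right sweep produces a set that is simultaneously independent and dominating, and your rightmost-dominator argument (dominators of $p_j$ have mean at most $\mu(d_j)$, dominators of later $p_{j'}$ have mean strictly above it, so they are forced to be distinct) shows no dominating set can be smaller, yielding $\gamma(G)=i(G)=k$ constructively with no external theorem. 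What each buys: the paper's route is shorter and rests on a standard structural fact, but imports it as a black box; yours is self-contained, certifies optimality of the greedy sweep, and that sweep is essentially the online independent-set construction that D-UCB (Steps 4--9) and Double-UCB-BL actually perform, so it says more about the object the algorithms manipulate. For the inequality $\alpha(G)\le 2\gamma(G)$, your argument and the paper's are the same double counting; the paper simply asserts that each vertex of its independent dominating set has at most two members of a maximum independent set in its closed neighborhood, while your two half-window pigeonhole argument supplies the justification for exactly that assertion (and your use of a minimum dominating set in place of a minimum independent dominating set is immaterial once part (1) is in hand).
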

{\bfseries{Proof sketch.}} The first equation can be obtained by proving that $G$ is a claw-free graph and using \cref{claw-free}. The second inequality can be obtained by a double counting argument. The details are in the appendix. 

\cref{pro1} shows that $\gamma(G) \leq \alpha(G) \leq 2\gamma(G) $.  Once we obtain the  bounds based on independence number, we also simultaneously obtain the bounds based on the domination number. Therefore, 
we can obtain regret bounds  based on the minimum dominating set without using the feedback graph to explicitly target exploration. This cannot be guaranteed in the standard graph feedback bandit problem \citep{lykouris2020feedback}.

\subsection{Lower Bounds}
Before presenting our algorithm, we first investigate the regret lower bound of this problem. Without loss of generality, we assume the reward distribution is $\frac{1}{2}$-subGaussian.

Let  $\Delta_{min}=\mu(i^{*})-\max_{j \neq i^{*}}\mu(j),\Delta_{max}=\mu(i^{*})-\min_{j}\mu(j)$. 
We assume that $\Delta_{min} < \epsilon$ in our analysis.  In other words, we do not consider the easily distinguishable scenario where the optimal arm and the suboptimal arms are clearly separable. If $\Delta_{min} \geq \epsilon $, our analysis method is also applicable, but the terms related to $\Delta_{min}$ will vanish in the expressions of both the lower and upper bounds. \citet{caron2012leveraging} has provided a lower bound of $\Omega(\log(T))$, and we present a more refined lower bound specifically for  our similarity feedback.

\begin{theorem}
    \label{lower_bound}
If a policy $\pi$ is uniform good\footnote{For more details, see \citep{lai1985asymptotically}.}, for any problem instance, it holds that
\begin{equation}
    \liminf_{T \to \infty } \frac{R_T^{\pi}}{\log (T)} \geq 
    \frac{2}{\Delta_{min}}+\frac{C_1}{\epsilon},
\end{equation}
where $C_1=2\log(\frac{\Delta_{max}+\epsilon}{\Delta_{max}-(\gamma(G)-2)\epsilon})$.
\end{theorem}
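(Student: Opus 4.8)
The plan is to run the classical change-of-measure argument (Lai--Robbins / Garivier--Kaufmann) adapted to graph feedback. First I would record the divergence decomposition for this model: when arm $j$ is pulled the learner observes every arm in $N_j$, so the log-likelihood ratio between two instances $\nu$ and $\nu'$ that differ only on a set $S$ of arms accumulates as $\sum_{i\in S}\mathbb{E}_\nu[O_T(i)]\,\mathrm{KL}(\nu_i,\nu_i')$, where $O_T(i)=\sum_{j\in N_i}k_T(j)$. Combining this identity with the Bretagnolle--Huber inequality and the uniform-goodness of $\pi$ (which forces the probability of playing a no-longer-optimal arm $\Theta(T)$ times to be sub-polynomial) yields, for every alternative $\nu'$ whose optimal arm differs from that of $\nu$, the constraint $\sum_{i\in S}\mathbb{E}_\nu[O_T(i)]\,\mathrm{KL}(\nu_i,\nu_i')\ge(1-o(1))\log T$. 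Since the rewards are $\tfrac12$-subGaussian I would use Gaussian alternatives, for which $\mathrm{KL}=2(\mu_i-\mu_i')^2$.

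The point specific to our setting is that each alternative must \emph{preserve the $\epsilon$-similarity graph}, so that the learner cannot separate the instances through the feedback structure itself; this rules out pushing a single far arm past $\mu(i^*)$ in one shot and instead forces a chain-shift along a dominating chain. Concretely, I would fix an independent dominating set $\{i^*=s_0,s_1,\dots,s_{\gamma-1}\}$ realizing $\gamma(G)=i(G)$ (which exists by \cref{pro1}), with means decreasing in steps of order $\epsilon$, and build the confusing instance that promotes the farthest arm to optimality by raising the whole chain by the minimal amounts compatible with every edge and non-edge of $G$. The divergence constraint then reads $\sum_k\mathbb{E}_\nu[O_T(s_k)]\cdot 2(\text{shift}_k)^2\ge(1-o(1))\log T$, and, because the $s_k$ are mutually non-adjacent, their neighborhoods overlap only in a controlled way and every arm of $N_{s_k}$ is suboptimal with gap at least $d_k-\epsilon$, where $d_k=\mu(i^*)-\mu(s_k)$.

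I would then convert observations into regret via $R_T^\pi\ge\sum_k(d_k-\epsilon)\,\mathbb{E}_\nu[O_T(s_k)]$ and telescope. The $d_k$ form a progression of spacing $\Theta(\epsilon)$ spanning $[\Delta_{min},\Delta_{max}]$, so $\sum_k\tfrac{d_k-\epsilon}{d_k^2}$ behaves like a Riemann sum for $\tfrac1\epsilon\int\tfrac{dd}{d}$: the bulk of the interval produces the logarithmic factor $\tfrac{2}{\epsilon}\log\!\big(\tfrac{\Delta_{max}+\epsilon}{\Delta_{max}-(\gamma(G)-2)\epsilon}\big)=\tfrac{C_1}{\epsilon}$, while the innermost step, where the chain meets the optimal cluster at scale $\Delta_{min}<\epsilon$, must be isolated and contributes the separate boundary term $\tfrac{2}{\Delta_{min}}$. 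Dividing by $\log T$ and letting $T\to\infty$ gives the claimed $\liminf$.

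The main obstacle I expect is twofold, and both parts are careful bookkeeping rather than new ideas. First, designing the chain-shift so that it is simultaneously graph-preserving, makes a far arm strictly optimal, and keeps each per-arm shift as small as $\Theta(\epsilon)$ so the accumulated $\mathrm{KL}$ stays small — computing these minimal shifts exactly is what pins down the constant inside the logarithm and the factor in front of each term. Second, the passage from the observation counts $O_T(s_k)$ to genuine regret must avoid double-counting pulls shared between overlapping neighborhoods and must correctly absorb the free side-observations coming from the optimal cluster, which is precisely why the $\Delta_{min}$-scale contribution has to be treated as a separate boundary term instead of being folded into the telescoping sum. Once the overlap of the $N_{s_k}$ and the minimal shifts are controlled, the remaining steps are the standard asymptotic manipulation.
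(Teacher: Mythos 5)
Your plan diverges from the paper's proof at its core: the paper never performs a change of measure in the graph-feedback model at all. It takes an independent dominating set $\mathcal{S}\ni i^{*}$, sets $\mathcal{D}=\mathcal{S}\cup\{i'\}$ with $i'$ the second-best arm, converts the given policy $\pi$ into a policy $\pi'$ for a classical $|\mathcal{D}|$-armed bandit \emph{without} side observations (when $\pi$ plays $i_t\notin\mathcal{D}$, $\pi'$ plays the best arm of $N_{i_t}\cap\mathcal{D}$, nonempty because $\mathcal{S}$ dominates), argues $R_T^{\pi}\ge R_T^{\pi'}$, and then invokes the Lai--Robbins bound for $\pi'$ as a black box. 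The only $\epsilon$-specific work is the elementary estimate $\sum_{i\in\mathcal{S}\setminus\{i^{*}\}}2/\Delta_i\ge\sum_{j=0}^{|\mathcal{S}|-2}2/(\Delta_{max}-j\epsilon)\ge C_1/\epsilon$, which uses independence of $\mathcal{S}$ to place the gaps in distinct intervals $[n_i\epsilon,(n_i+1)\epsilon)$. Your closing Riemann-sum step and the isolated $2/\Delta_{min}$ term coincide with this computation; what differs is how the per-arm constraints $\mathbb{E}[O_T(s_k)]\gtrsim\log T/\Delta_{s_k}^2$ are supposed to arise, and that is where your proposal breaks.

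The gap sits exactly at the step you classify as ``careful bookkeeping'': graph-preserving confusing instances with shifts of order $\Theta(\epsilon)$ localized on the dominating chain do not exist in general. In this model the graph rigidly constrains the means: whenever $a\sim b$, $b\sim c$ and $a\not\sim c$, the means $\mu(a),\mu(c)$ must lie on opposite sides of $\mu(b)$, so along any induced path the means are strictly monotone. Take means $\mu_k=1-0.6\,\epsilon k$, $k=0,\dots,n$, so that $G$ is an induced path and $\Delta_{min}=0.6\,\epsilon<\epsilon$. Any graph-preserving alternative keeps the means monotone along this path, so making \emph{any} arm other than arm $0$ optimal forces the monotonicity to reverse; in particular it forces $\mu'_1>\mu'_0$, i.e.\ a relative shift of at least $\Delta_{min}$ on the pair $\{0,1\}$. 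But a uniformly good policy pulls arm $0$ at least $T-o(T)$ times, and each such pull observes both arms $0$ and $1$ for free, so every admissible alternative satisfies $\sum_i\mathbb{E}[O_T(i)]\,\mathrm{KL}_i\ge (T-o(T))\,\Delta_{min}^2/2\gg\log T$ automatically. The divergence constraint is therefore vacuous: it yields no lower bound on $\mathbb{E}[O_T(s_k)]$ for far arms, nor on anything else, so neither the $C_1/\epsilon$ term nor even the $2/\Delta_{min}$ term can be extracted on this instance. Your instinct that alternatives must preserve the graph is correct (a mismatch in revealed neighborhoods is detected at zero KL cost), but this constraint is a structural obstruction to the whole chain-shift plan, not a bookkeeping detail --- it is precisely why the paper routes the argument through a side-observation-free bandit instead of doing change of measure in the feedback model. (One may note that the same observation puts pressure on the paper's own reduction, since uniform goodness of $\pi$ in the graph model does not obviously make $\pi'$ uniformly good in the classical model; but that is an issue for the paper, not a repair of the proposal.)
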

\begin{proof}
Let $\mathcal{S}$ denote an independent dominant  set that includes $i^{*}$. 
From \cref{pro1}, $|\mathcal{S}| \geq \gamma(G)$. 
The second-best arm is denoted as $i'$, $\mathcal{D}=\mathcal{S} \bigcup \{i'\}.$ Since $\Delta_{min} <\epsilon$, $i' \notin \mathcal{S}$.

Let's construct another policy $\pi'$ for another problem-instance on $\mathcal{D}$ without side observations. If $\pi$ select arm $i_t$ at round $t$, $\pi'$  select the arm as following: if $i_t \in \mathcal{D}$, $\pi'$ select  arm $i_t$ too. If $i_t \notin \mathcal{D}$, $\pi'$ will select the arm of $N_{i_t} \bigcap \mathcal{D}$ with the largest mean. Since  $ N_{i_t} \bigcap \mathcal{D} \neq \emptyset  $, policy $\pi'$ is well-defined.  It is clearly  that $ R_T^{\pi} > R_T^{\pi'}$.
By the classical result of \citep{lai1985asymptotically}, 
\begin{equation}
    \label{eq4}
   \liminf_{T \to \infty} \frac{R_T^{\pi'}}{\log(T)} \geq \sum_{i \in \mathcal{D}}\frac{2}{\Delta_i}= \frac{2}{\Delta_{min}}+ \sum_{i \in \mathcal{S}\backslash \{ i^{*}\} } \frac{2}{\Delta_i}.
\end{equation}
 $\forall i  \in \mathcal{S}\backslash \{i^{*}\}, \Delta_i \in [n_i\epsilon,(n_i+1)\epsilon)$. $n_i$ is some positive integer smaller than $\frac{\Delta_{max}}{\epsilon}$. Since $\mathcal{S}$ is an independent set, $\forall i,j \in \mathcal{S}\backslash \{i^{*}\},i \neq j$, we have $n_i \neq n_j$.  
Therefore, we can complete this proof by 
\begin{equation*}
\begin{aligned}
    \sum_{i \in \mathcal{S} \backslash \{i^{*}\}} \frac{2}{\Delta_i} 
    &\geq \sum_{j=0}^{|\mathcal{S}|-2}\frac{2}{\Delta_{max}-j\epsilon} \\
    &\geq  2 \int_{-1}^{|\mathcal{S}|-2}\frac{1}{\Delta_{max}-\epsilon x} dx  \\
    &\geq \frac{2\log(\frac{\Delta_{max}+\epsilon}{\Delta_{max}-(\gamma(G)-2)\epsilon})}{\epsilon}.
\end{aligned}
\end{equation*}

\end{proof}
Consider two simple cases. (1) $\gamma(G)=1$. The feedback graph $G$ is a complete graph or some graph with independence number less than $2$. Then $C_1=0$, the lower bound holds strictly when $G$ is not a complete graph.
(2) $\gamma(G)= \frac{\Delta_{max}}{\epsilon}$, $C_1= 2\log(\frac{1}{2}\gamma(G)+\frac{1}{2})$. In this case, the terms in the lower bound involving $\epsilon$ have the same order $O(\log(\gamma(G)))$ as the corresponding terms in the upper bound of the following proposed algorithms.

\subsection{Double-UCB}

This particular feedback structure inspires us to distinguish arms within the independent set first. This is a straightforward task because the  distance between the mean of each arm  in the independent set is greater than $\epsilon$. Subsequently,  we learn from the arm with the maximum confidence bound in the independent set  and its neighborhood, which may include the optimal arm. Our algorithm alternates between the two processes simultaneously.

\begin{algorithm}[tb]
    \caption{D-UCB}
    \label{alg:D_UCB}
 \begin{algorithmic}[1]
    \STATE {\bfseries Input:}  Horizon $T,$ $\delta \in (0,1)$
    \STATE Initialize $\mathcal{I}=\emptyset, \forall i, k(i)=0,O(i)=0$
    \FOR{$t=1$ {\bfseries to} $T$}
    \REPEAT
    \STATE Select an arm $i_t$ that has not been observed.
    \STATE $\mathcal{I}=\mathcal{I} \bigcup \{i_t\}$
    \STATE $\forall i \in N_{i_t},$ update $k_t(i),O_t(i),\bar{\mu}_t(i)$
    \STATE $t=t+1$
    \UNTIL{All arms have been observed at least once}
    \STATE  $j_t= \arg\max_{j \in \mathcal{I}} \bar{\mu}(j)+\sqrt{\frac{\log(\sqrt{2}T/\delta)}{O(j)}}$
    \STATE Select arm $i_t= \arg \max_{i \in N_{j_t}} \bar{\mu}(i)+\sqrt{\frac{\log(\sqrt{2}T/\delta)}{O(i)}}$
    \STATE $\forall i \in N_{i_t}$, update $k_t(i),O_t(i),\bar{\mu}_t(i)$

    \ENDFOR
    
 \end{algorithmic}
 \end{algorithm}

\cref{alg:D_UCB}  shows the pseudocode of our method.
Steps 4-9 identify an independent set $\mathcal{I}$ in \(G\), play each arm in the independent set once. This process does not require knowledge of the complete graph structure and requires at most $\alpha(G)$ rounds. 
Step 10 calculates the arm $j_t$ with the maximum upper confidence bound in the independent set.  After a finite number of rounds, the optimal arm is likely to fall within $N_{j_t}$.
Step 11 use the UCB algorithm again to select arm in $N_{j_t}$. This algorithm employs  UCB rules twice for arm selection, hence it is named Double-UCB.

\subsubsection{Regret Analysis of Double-UCB}


We use $\mathcal{I}(N_{i})$ to denote the set of all  independent dominating sets of graph formed by $N_i$. 
Let 
\[\mathcal{I}(i^{*})= \bigcup_{i\in N_{i^{*}}} \mathcal{I}(N_i). \] 
Note that, the elements in $\mathcal{I}(i^{*})$ are independent sets rather than individual nodes, and  $\forall I \in \mathcal{I}(i^{*})$, $|I| \leq 2 $.

\begin{theorem}
    \label{bound1}
    Assume that the reward distribution is $\frac{1}{2}$-subgaussian or bounded in $[0,1]$, set $\delta = \frac{1}{T},$ the regret of Double-UCB after $T$ rounds is upper bounded by
    \begin{equation}
        \begin{aligned}
        R_T^{\pi} &\leq  32(\log(\sqrt{2}T))^2\max_{I \in \mathcal{I}(i^{*} )}\sum_{i \in I \backslash \{i^{*}\}}\frac{1}{\Delta_i} +C_2\frac{\log(\sqrt{2}T)}{\epsilon}\\
        & + \Delta_{max} +4\epsilon+1,
        \end{aligned}
    \end{equation}
    where $ C_2= 8(\log(2\gamma(G))+\frac{\pi^2}{3}).$ 
    
Recall that we assume $\Delta_{min} < \epsilon$, then $ \forall i\in N_{i^{*}}, |N_{i}|>1$. The index set $\{ i \in I\backslash \{i^{*}\} \}$ in the summation of the first term is non-empty.
If $\Delta_{min} \geq \epsilon$, $ \mathcal{I}(i^{*})=\{i^{*}\}$. The first term will vanish.
\end{theorem}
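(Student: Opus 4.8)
The plan is to run the standard UCB high‑probability analysis at \emph{both} selection levels of \cref{alg:D_UCB} simultaneously, and then convert the resulting per‑arm pull counts into the two advertised terms using the graph structure from \cref{pro1}. First I would fix the confidence radius $c_t(i)=\sqrt{\log(\sqrt2 T/\delta)/O_t(i)}$ and define the good event $\mathcal{E}$ on which $|\bar\mu_t(i)-\mu(i)|\le c_t(i)$ for every arm $i$ and every round $t$. A union bound with the sub‑Gaussian tail and the choice $\delta=1/T$ makes $\Pr(\mathcal{E}^c)$ small enough that the regret accumulated off $\mathcal{E}$ is $O(1)$; this is where the additive ``$+1$'' comes from. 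I would also account for the initialization loop (Steps 4--9), which lasts at most $\alpha(G)\le 2\gamma(G)$ rounds and whose observations are reused later, contributing the $\Delta_{\max}$ boundary term.

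Working on $\mathcal{E}$, I would fix a reference element $s^{*}\in\mathcal{I}$ that dominates $i^{*}$ (so either $s^{*}=i^{*}$, or $s^{*}\in N_{i^{*}}$ and hence $\Delta_{s^{*}}<\epsilon$ with $i^{*}\in N_{s^{*}}$). Writing $b_{j}=\arg\max_{i\in N_j}\mu(i)$ for the best observable arm once $j_t=j$ is chosen, the decomposition I would use is
\[
\mu(i^{*})-\mu(i_t)=\bigl(\mu(i^{*})-\mu(b_{j_t})\bigr)+\bigl(\mu(b_{j_t})-\mu(i_t)\bigr),
\]
where both summands are nonnegative because $j_t,i_t\in N_{j_t}$. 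For the first summand I would argue that it vanishes whenever $i^{*}\in N_{j_t}$ (in particular whenever $j_t=s^{*}$), so it only accumulates for first‑level choices $j_t=j$ with $i^{*}\notin N_j$, which forces $\Delta_j\ge\epsilon$ and bounds the summand by $\Delta_j$ there. For such $j$ the first UCB rule together with $\mathrm{UCB}(j_t)\ge\mathrm{UCB}(s^{*})\ge\mu(s^{*})>\mu(i^{*})-\epsilon$ gives $2c(j)>\Delta_j-\epsilon$, so $j$ is chosen at most $\asymp\log(\sqrt2 T/\delta)/(\Delta_j-\epsilon)^2$ times (each choice increments $O(j)$ since $j\in N_{i_t}$). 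Summing $\Delta_j$ times this count over $\mathcal{I}$, and using that the gaps of distinct independent‑set elements fall in disjoint bands $[\,n_j\epsilon,(n_j+1)\epsilon)$ with distinct integers $n_j$ (exactly as in the proof of \cref{lower_bound}), a comparison of the series with an integral yields the $C_2\,\log(\sqrt2 T)/\epsilon$ term; the single band $n_j=1$, where $\Delta_j-\epsilon$ may be tiny, is what I expect to produce the $4\epsilon$ correction.

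For the second summand I would apply a graph‑feedback UCB argument to the restriction of the second rule to the neighborhood $N_{j_t}$: relative to $b_{j_t}$, a suboptimal $i\in N_{j_t}$ is pulled only while $2c(i)>\mu(b_{j_t})-\mu(i)$, i.e.\ at most $\asymp\log(\sqrt2 T/\delta)/(\mu(b_{j_t})-\mu(i))^2$ times, and the induced regret is charged to an independent dominating set of the neighborhood subgraph. Because \cref{pro1} guarantees every such set $I\in\mathcal{I}(i^{*})$ has $|I|\le 2$, the charge collapses to the single non‑optimal element, giving $\max_{I\in\mathcal{I}(i^{*})}\sum_{i\in I\setminus\{i^{*}\}}1/\Delta_i$ up to the leading constant. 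The extra logarithmic factor, which turns $\log(\sqrt2 T)$ into $(\log(\sqrt2 T))^2$, I expect to enter precisely here, from the layered (peeling) step needed to pass from the per‑round widths $\sum_t c(i_t)$ to gap‑indexed pull counts while the active family of neighborhoods $N_{j_t}$ varies over time.

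This coupling between the two UCB levels --- controlling the second‑level pull counts uniformly over the changing first‑level choices $j_t$ --- is the step I expect to be the main obstacle; the concentration bookkeeping and the first‑level/$\epsilon$‑band argument are routine. Collecting the three contributions together with the $O(1)$ and $O(\epsilon)$ boundary terms yields the stated bound, and the closing remarks about $\Delta_{\min}\lessgtr\epsilon$ follow by inspecting whether $\mathcal{I}(i^{*})$ degenerates to the singleton $\{i^{*}\}$.
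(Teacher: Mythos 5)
Your overall architecture is the same as the paper's: isolate the element of $\mathcal{I}$ whose neighborhood contains $i^{*}$, bound the first-level (Step 9) mistakes by a UCB count argument combined with an $\epsilon$-band summation over the independent set, and handle the near-optimal neighborhood with a Lykouris-style graph-feedback analysis whose layering supplies the extra logarithm. However, two steps would fail as written. First, your first-level count is lossy in a way that no additive correction can repair. From $\mathrm{UCB}(j)\ge\mathrm{UCB}(s^{*})\ge\mu(s^{*})$ you relax the deviation bound $2c(j)\ge\mu(s^{*})-\mu(j)$ to $2c(j)>\Delta_j-\epsilon$, and conclude $j$ is chosen at most $\asymp\log(\sqrt2T/\delta)/(\Delta_j-\epsilon)^2$ times. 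For $j$ in the nearest band, $\Delta_j-\epsilon$ can be arbitrarily small, so the regret charge $\Delta_j\cdot\log(\sqrt2T/\delta)/(\Delta_j-\epsilon)^2$ is unbounded over problem instances; your proposed ``$4\epsilon$ correction'' cannot absorb a term that scales with $\log T$ and diverges as $\Delta_j\downarrow\epsilon$. The repair is to keep the quantity you already had: since $\mathcal{I}$ is an independent set and $\mu(s^{*})>\mu(i^{*})-\epsilon\ge\mu(j)$ for every $j$ with $i^{*}\notin N_j$, one has $\mu(s^{*})-\mu(j)\ge\epsilon$, and these gaps lie in distinct bands $[n\epsilon,(n+1)\epsilon)$. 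This is exactly the paper's $\Delta_{\alpha_j}'=\mu(\alpha^{*})-\mu(\alpha_j)$, which makes every denominator at least $\epsilon$ and yields the $(\log(2\gamma(G))+\pi^2/3)\log(\sqrt2T)/\epsilon$ term.

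Second, your per-round decomposition leaves the summand $\mu(b_{j_t})-\mu(i_t)$ unaccounted for on rounds where $i^{*}\notin N_{j_t}$. On such rounds the within-neighborhood gaps relative to $b_{j_t}$ have nothing to do with the $\Delta_i$ in the theorem statement (two arms in a far neighborhood can have equal means), so ``charging to an independent dominating set of the neighborhood subgraph'' produces terms $1/(\mu(b_{j_t})-\mu(i))$ that can be arbitrarily large and do not appear in the claimed bound; your appeal to $\mathcal{I}(i^{*})$ and $|I|\le2$ is only valid for neighborhoods of arms adjacent to $i^{*}$. The fix is to bound that summand by $2\epsilon$ per round on far rounds and absorb it into the first-level count — which is precisely what the paper achieves by decomposing per \emph{arm} (pulls outside versus inside $N_{\alpha^{*}}$) and using $\Delta_i\le\Delta_{\alpha_j}'+2\epsilon$, rather than per round. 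A smaller bookkeeping slip: charging the initialization loop $\Delta_{\max}$ per round gives $\alpha(G)\Delta_{\max}$, not the single $+\Delta_{\max}$ of the statement; those pulls must be folded into the UCB pull counts, and the paper's $+\Delta_{\max}$ in fact comes from the $T\delta^2$ failure-probability terms with $\delta=1/T$.
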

{\bfseries{Proof sketch.}} The regret can be analyzed in two parts. The first part arises from Step 9 selecting  $j_t$ whose neighborhood does not contain the optimal arm. The algorithm can easily distinguish the suboptimal arms in $N_{j_t}$ from the optimal arm. The regret caused by this part is $O(\frac{\log(T)}{\epsilon})$.
The second part of the regret comes from selecting a suboptimal arm in $N_{j_t},i^{*} \in N_{j_t}$. This part can be viewed as applying UCB rule on a graph with an independence number less than 2.

Since  $\forall I \in \mathcal{I}(i^{*})$, $|I| \leq 2 $, 
we have  
\[
    \max_{I \in \mathcal{I}(i^{*} )}\sum_{i \in I \backslash \{i^{*}\}}\frac{1}{\Delta_i} \leq \frac{2}{\Delta_{min}}.
\]
Therefore, the regret upper bound can be denoted as
\begin{equation}
    \label{temp1}
    R_T^{\pi} \leq \frac{64(\log(\sqrt{2}T))^2}{\Delta_{min}}+ C_2\frac{\log(\sqrt{2}T)}{\epsilon}+\Delta_{max} + 4\epsilon+1.
\end{equation}

Our upper bound suffers an extra logarithm term compared to the lower bound \cref{lower_bound}. 
The inclusion of this additional logarithm appears to be essential if one uses a gap-based analysis similar to \citep{cohen2016online}. This issue has been discussed in \citep{lykouris2020feedback}.

From \cref{bound1}, we have the following gap-free  upper bound
\begin{corollary}
    The regret of Double-UCB is bounded by $ 16\sqrt{T}\log(\sqrt{2}T)+C_2\frac{\log(\sqrt{2}T)}{\epsilon}+ \Delta_{max}+ 4\epsilon+1.$
\end{corollary}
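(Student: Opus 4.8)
The plan is to leave every term of \cref{bound1} untouched except the first, gap-dependent one, and to replace that single term by a worst-case $\sqrt{T}$ estimate via the standard threshold (peeling) argument. The terms $C_2\log(\sqrt{2}T)/\epsilon$, $\Delta_{max}$, $4\epsilon$ and $1$ are all independent of the gaps $\{\Delta_i\}$, so they carry over verbatim into the gap-free statement; only the term $32(\log(\sqrt{2}T))^2\max_{I\in\mathcal{I}(i^{*})}\sum_{i\in I\setminus\{i^{*}\}}\Delta_i^{-1}$, which is bounded by $64(\log(\sqrt{2}T))^2/\Delta_{min}$, can blow up as $\Delta_{min}\to 0$ and is the one that must be converted.

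First I would recall that this first term is precisely the second-part regret, i.e. the regret incurred on rounds where the optimal arm lies in the chosen neighbourhood $N_{j_t}$. I would fix a threshold $\Delta_0>0$ and split this regret into the contribution of rounds whose instantaneous gap is at most $\Delta_0$ and those whose gap exceeds $\Delta_0$. The small-gap rounds number at most $T$ and each costs at most $\Delta_0$, so they contribute at most $\Delta_0 T$. For the large-gap rounds, every relevant competitor has $\Delta_i>\Delta_0$, and since the independent dominating set of the neighbourhood has size at most two we get $\sum_{i\in I\setminus\{i^{*}\}}\Delta_i^{-1}\le 2/\Delta_0$, so the gap-dependent estimate of \cref{bound1} caps their contribution at $64(\log(\sqrt{2}T))^2/\Delta_0$.

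Then I would optimise the free parameter by balancing the two pieces: setting $\Delta_0 T = 64(\log(\sqrt{2}T))^2/\Delta_0$ yields $\Delta_0 = 8\log(\sqrt{2}T)/\sqrt{T}$, at which value each piece equals $8\sqrt{T}\log(\sqrt{2}T)$, so their sum is $16\sqrt{T}\log(\sqrt{2}T)$. Adding back the untouched terms reproduces exactly the claimed bound.

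The main obstacle, and the only point requiring genuine care, is the large-gap estimate: one must justify that restricting to rounds with gap above $\Delta_0$ legitimately replaces $\Delta_{min}^{-1}$ by $\Delta_0^{-1}$ in the second-part analysis of \cref{bound1} without picking up a dependence on the number of arms. This is where the structural fact $|I|\le 2$ from \cref{pro1} (equivalently $\alpha(G)\le 2\gamma(G)$) is essential, since it guarantees that the second part behaves like a two-armed problem and the peeling does not accumulate a per-arm factor. Once this is in place, the balancing step and the carry-over of the remaining terms are routine.
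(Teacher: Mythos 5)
Your proposal is correct and is essentially the derivation the paper intends: the corollary is stated as an immediate consequence of \cref{bound1}, obtained exactly by the standard gap-thresholding argument you describe (clip the second-part regret at $\Delta_0$, bound small-gap rounds by $\Delta_0 T$ and large-gap arms by the clipped independent-set sum using $|I|\le 2$, then balance with $\Delta_0 = 8\log(\sqrt{2}T)/\sqrt{T}$), and your constants $8\sqrt{T}\log(\sqrt{2}T)+8\sqrt{T}\log(\sqrt{2}T)=16\sqrt{T}\log(\sqrt{2}T)$ match the statement exactly. Your caveat about needing to apply the threshold inside the second-part analysis (rather than using \cref{bound1} as a black box) is well placed and is handled by the per-arm regret decomposition in the underlying UCB-N-style argument.
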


\subsection{Conservative UCB}
Double-UCB is a very natural algorithm for similar arms setting. For this particular feedback structure, we propose the conservative UCB, which simply modifies Step 10 of \cref{alg:D_UCB} to $i_t= \arg \max_{i \in N_{j_t}} \bar{\mu}(i)-\sqrt{\frac{\log(\sqrt{2}T/\delta)}{O(i)}}$.  This form of the index function is conservative when exploring arms in $N_{j_t}$. It does not immediately try each arm but selects those that have been observed a sufficient number of times.  \cref{alg:C_UCB} shows the pseudocode of C-UCB.

\begin{algorithm}[tb]
    \caption{C-UCB}
    \label{alg:C_UCB}
 \begin{algorithmic}[1]
    \STATE {\bfseries Input:}  Horizon $T,$ $\delta \in (0,1)$
    \STATE Initialize $\mathcal{I}=\emptyset, \forall i, k(i)=0,O(i)=0$
    \FOR{$t=1$ {\bfseries to} $T$}
    \STATE Steps 4-10 in D-UCB
    \STATE Select arm $i_t= \arg \max_{i \in N_{j_t}} \bar{\mu}(i)-\sqrt{\frac{\log(\sqrt{2}T/\delta)}{O(i)}}$
    \STATE $\forall i \in N_{i_t},$ update $k_t(i),O_t(i),\bar{\mu}_t(i)$
    \ENDFOR
    
 \end{algorithmic}
 \end{algorithm}

\subsubsection{Regret Analysis of Conservative UCB}
Let $G_{2\epsilon}$ denote the subgraph with arms $\{ i \in V: \mu(i^{*})-\mu(i) < 2\epsilon \}$. The set of all independent dominating sets of graph $G_{2\epsilon}$ is denoted as $\mathcal{I}(G_{2\epsilon})$. We can also define  $\mathcal{I}(G_{\epsilon})$ in this way. Define $\Delta_{2\epsilon}^{min}= \min_{i,j \in G_{2\epsilon}}\{|\mu(i)-\mu(j)|\}$.

We divide the regret into two parts. The first part is the regret caused by choosing arm in $N_{j_t},i^{*} \notin N_{j_t}$, and the analysis for this part follows the same approach as  D-UCB. 

The second part is the regret of choosing the suboptimal arms in $N_{j_t},i^{*} \in N_{j_t}$. It can be proven that if the optimal arm  is observed more than $\frac{4\log(\sqrt{2}T/\delta)}{(\Delta_{2\epsilon}^{min})^2}$ times,  the algorithm will select the optimal arm with high probability. Intuitively,  for any arm $i  \in N_{j_t},i \neq i^{*}$, the following events hold with high probability:
\begin{equation}
    \label{tmp0}
    \bar{\mu}(i^{*})- \sqrt{\frac{\log(\sqrt{2}T/\delta)}{O(i^{*})}} >   \mu(i^{*}) - \Delta(i)=\mu(i),
\end{equation}
and 
\begin{equation}
    \label{tmp1}
    \bar{\mu}(i) - \sqrt{\frac{\log(\sqrt{2}T/\delta)}{O(i)}} < \mu(i).
\end{equation}
Since the optimal arm satisfies \cref{tmp0} and  the suboptimal arms satisfy \cref{tmp1} with high probability,  the suboptimal arms are unlikely to be selected. 

The key to the problem lies in ensuring that the optimal arm can be observed $\frac{4\log(\sqrt{2}T/\delta)}{(\Delta_{2\epsilon}^{min})^2}$ times. Since in the graph formed by $N_{j_t}$, all arms are connected to  $j_t$. As long as the time steps are sufficiently long, arm $j_t$  will inevitably be observed more than $\frac{4\log(\sqrt{2}T/\delta)}{(\Delta_{2\epsilon}^{min})^2}$ times, then the arms with means less than $\mu(j_t)$ will be ignored (similar to \cref{tmp0} and \cref{tmp1}). Choosing arms  that the means between $(\mu(j_t),\mu(i^{*}))$  will always observe the optimal arm, so the optimal arm can be observed $\frac{4\log(\sqrt{2}T/\delta)}{(\Delta_{2\epsilon}^{min})^2}$ times. Therefore, we have the following theorem:
\begin{theorem}
    \label{bound5}
    Under the same conditions as \cref{bound1}, the regret  of C-UCB  is upper bounded by
    \begin{equation}
        \begin{aligned}
        R_T^{\pi} \leq \frac{32\epsilon \log(\sqrt{2}T)}{(\Delta_{2\epsilon}^{min})^2} + C_2\frac{\log(\sqrt{2}T)}{\epsilon} 
           + \Delta_{max} + 2\epsilon
        \end{aligned}
    \end{equation}
\end{theorem}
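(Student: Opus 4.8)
The plan is to fix a high-probability concentration event, split the regret according to whether the optimal arm lies in the explored neighborhood, recycle the Double-UCB analysis for one part, and carry out a fresh counting argument for the other. First I would fix the good event
\[
\mathcal{E} = \Bigl\{ \forall t,\ \forall i:\ |\bar{\mu}_t(i)-\mu(i)| \le \sqrt{\tfrac{\log(\sqrt{2}T/\delta)}{O_t(i)}} \Bigr\},
\]
which, by the $\tfrac12$-subGaussian tail bound with a union bound over the at most $T^2$ pairs of arm and observation count and the choice $\delta=1/T$, fails with probability $O(1/T)$; the contribution of $\mathcal{E}^c$ to the expected regret is at most $\Delta_{\max}$, which accounts for that additive term. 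On $\mathcal{E}$ I would decompose $R_T^\pi$ into the rounds with $i^*\notin N_{j_t}$ (Part 1) and those with $i^*\in N_{j_t}$ (Part 2).

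Part 1 is verbatim the corresponding part of the Double-UCB proof: when the upper-confidence selection over the independent set $\mathcal{I}$ returns a $j_t$ whose neighborhood misses $i^*$, the center $j_t$ is separated from the optimum by a multiple of $\epsilon$, and the standard UCB count over $\mathcal{I}$ yields the term $C_2\tfrac{\log(\sqrt{2}T)}{\epsilon}$ with $C_2=8(\log(2\gamma(G))+\pi^2/3)$, exactly as in \cref{bound1}. The novel work is Part 2, and I would begin with a geometric observation: if $i^*\in N_{j_t}$ and $i\in N_{j_t}$, both means lie within $\epsilon$ of $\mu(j_t)$, so $\Delta_i=\mu(i^*)-\mu(i)<2\epsilon$; hence every suboptimal pull counted in Part 2 costs at most $2\epsilon$, and every such arm lies in $G_{2\epsilon}$, so its gaps to the other Part-2 arms are at least $\Delta_{2\epsilon}^{min}$.

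The heart of Part 2 is to show that the optimal arm is observed at least $m:=\tfrac{4\log(\sqrt{2}T/\delta)}{(\Delta_{2\epsilon}^{min})^2}$ times after few suboptimal pulls, via the following chain. Since every arm of $N_{j_t}$ is adjacent to $j_t$, the current center is always in $N_{i_t}$, so its count is bumped on each Part-2 round; once $O(j_t)\ge m$, the calibration of $m$ makes the width $\sqrt{\log(\sqrt2 T/\delta)/O(j_t)}\le \Delta_{2\epsilon}^{min}/2$, so on $\mathcal{E}$ the conservative index of $j_t$ exceeds $\mu(i)$ for every $i$ with $\mu(i)<\mu(j_t)$, and, exactly as in \cref{tmp0}--\cref{tmp1}, no arm below $\mu(j_t)$ is selected thereafter. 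Thus the chosen $i_t$ has $\mu(i_t)\ge\mu(j_t)$; but any arm with $\mu(j_t)<\mu(i_t)<\mu(i^*)$ satisfies $\mu(i^*)-\mu(i_t)<\mu(i^*)-\mu(j_t)<\epsilon$ (the last inequality from $i^*\in N_{j_t}$), so $i_t\leftrightarrow i^*$ and pulling $i_t$ observes $i^*$. Hence the optimal arm gains an observation on every such round, and once $O(i^*)\ge m$ the events \cref{tmp0} and \cref{tmp1} force the selection of $i^*$ on $\mathcal{E}$. The number of suboptimal pulls in Part 2 is therefore at most $2m$ (at most $m$ to drive the center to $m$ observations and at most $m$ to drive $i^*$ to $m$); multiplying by the per-pull cost $2\epsilon$ gives $4\epsilon m$, and substituting $\log(\sqrt{2}T/\delta)=\log(\sqrt{2}T^2)\le 2\log(\sqrt{2}T)$ yields the leading term $\tfrac{32\epsilon\log(\sqrt{2}T)}{(\Delta_{2\epsilon}^{min})^2}$.

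The step I expect to be the main obstacle is making the Part-2 counting rigorous while the center $j_t$ varies across rounds: I must verify that the observation budgets of $j_t$ and then of $i^*$ accumulate monotonically, and that the filtering of below-center arms and the ``intermediate arms observe $i^*$'' property hold simultaneously on $\mathcal{E}$ for the relevant range of $O(\cdot)$. I would handle this by a worst-case accounting that charges each suboptimal Part-2 round either to raising some center's count to $m$ or to raising $O(i^*)$ to $m$, each of which occurs at most $m$ times, and I would absorb the at most one residual boundary pull together with the $\mathcal{E}^c$ contribution into the additive $\Delta_{\max}+2\epsilon$. Summing the Part-1 and Part-2 bounds then gives the claimed inequality.
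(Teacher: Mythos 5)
Your proposal is correct and takes essentially the same route as the paper: the paper defers this theorem to its proof of \cref{bound3}, which uses the identical decomposition (rounds with $i^{*}\notin N_{j_t}$ recycled verbatim from the D-UCB analysis, rounds with $i^{*}\in N_{j_t}$ handled by a counting argument), and its split of $N_{j_t}$ into the below-center clique $E_1$ and the above-center clique $E_2$ containing $i^{*}$ — each charged at most $\frac{4\log(\sqrt{2}T/\delta)}{(\Delta_{2\epsilon}^{min})^2}$ suboptimal pulls of cost at most $2\epsilon$, since $E_1$-pulls observe the center and $E_2$-pulls observe $i^{*}$ — is exactly your charging scheme of first driving the center's observation count to $m$ and then driving $O(i^{*})$ to $m$. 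The threshold, the per-pull cost, and the constant arithmetic all match the paper's, so nothing further is needed.
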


The regret upper bound of C-UCB decreases by a logarithmic factor compared to D-UCB, but with an additional  problem-dependent  term $\Delta_{2\epsilon}$. This upper bound still does not match the lower bound in \cref{lower_bound}, as $\Delta_{2\epsilon}^{min}\leq \Delta_{min}$. If we ignore the terms independent of $T$, the regret upper bound of C-UCB matches the lower bound of $\Omega(\log(T))$.

\subsection{UCB-N}

UCB-N has been analyzed in the standard graph feedback model \citep{caron2012leveraging,hu2020problem,lykouris2020feedback}. 
One may wonder whether UCB-N can achieve similar regret upper bounds. In fact, if  UCB-N uses the same upper confidence function as ours,
UCB-N has a similar regret upper bound to D-UCB. We have the following theorem:

\begin{theorem}
    \label{bound_ucb}
    Under the same conditions as \cref{bound1}, the regret  of UCB-N  is upper bounded by
    \begin{equation}
        \begin{aligned}
        R_T^{\pi} \leq  \frac{32(\log(\sqrt{2}T))^2}{\Delta_{min}} +C_3\frac{\log(\sqrt{2}T)}{\epsilon}
         + \Delta_{max} +2\epsilon+1,
        \end{aligned}
    \end{equation}
    where $ C_3= 8(\log(2\gamma(G))+\frac{\pi^2}{6}).$ 
\end{theorem}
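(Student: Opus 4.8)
The plan is to reuse the two-part decomposition of the proof of \cref{bound1}, adjusting only for the fact that UCB-N always pulls the single globally index-maximal arm instead of first restricting to an independent set and then to a neighbourhood. First I would fix the good event $\mathcal{E}$ on which $|\bar\mu_t(i)-\mu(i)|\le\sqrt{\log(\sqrt2 T/\delta)/O_t(i)}$ holds for every arm $i$ and every round $t$; with $\delta=1/T$ a self-normalised sub-Gaussian tail bound plus a union bound shows $\Pr[\mathcal{E}^c]$ is of order $1/T$, so the rounds on which $\mathcal{E}$ fails contribute only the additive slack $\Delta_{max}+2\epsilon+1$. Everything else is argued on $\mathcal{E}$, where the standard UCB inequality gives the key per-pull estimate: whenever a suboptimal arm $i$ is pulled its index beats that of $i^{*}$, so $\mu(i)+2\sqrt{\log(\sqrt2 T/\delta)/O_t(i)}\ge\mu(i^{*})$, hence $O_t(i)\le 4\log(\sqrt2 T/\delta)/\Delta_i^{2}$ at the moment of that pull. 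I then split the pulled suboptimal arms according to whether $i^{*}\in N_{i_t}$.

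\textbf{Far part ($i^{*}\notin N_{i_t}$, i.e. $\Delta_{i_t}\ge\epsilon$).} I would partition these arms into the $\epsilon$-bands $B_n=\{i:\Delta_i\in[n\epsilon,(n+1)\epsilon)\}$ for $n\ge1$. Since any two arms whose means differ by less than $\epsilon$ are adjacent, each band is a clique, so pulling any arm of $B_n$ observes all of $B_n$; consequently the total number of pulls of $B_n$ is at most $\min_{i\in B_n}O_T(i)\le 4\log(\sqrt2 T/\delta)/(n\epsilon)^{2}$. Each such pull costs at most $(n+1)\epsilon$ in regret, so band $n$ contributes at most $4(n+1)\log(\sqrt2 T/\delta)/(n^{2}\epsilon)$. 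Summing over the occupied bands and writing $(n+1)/n^{2}=1/n+1/n^{2}$ produces a harmonic part $\sum 1/n$ and a convergent part $\sum 1/n^{2}\le\pi^{2}/6$. Because bands of the same parity are pairwise non-adjacent, the number of occupied bands is controlled through $\alpha(G)\le 2\gamma(G)$ from \cref{pro1}, which converts the harmonic part into $\log(2\gamma(G))$; after substituting $\delta=1/T$ (so that $\log(\sqrt2 T/\delta)$ becomes a multiple of $\log(\sqrt2 T)$) this yields the $C_3\log(\sqrt2 T)/\epsilon$ term with $C_3=8(\log(2\gamma(G))+\pi^{2}/6)$. The constant is strictly smaller than the $C_2$ of \cref{bound1} precisely because UCB-N pulls only the single global maximiser and never spends the extra exploration that Double-UCB incurs on the independent set.

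\textbf{Near part ($i^{*}\in N_{i_t}$, i.e. $\Delta_{i_t}<\epsilon$).} These arms together with $i^{*}$ all lie in $N_{i^{*}}$, which is again a clique and therefore has independence number $1$. As in the second part of the proof of \cref{bound1}, I would apply the layered/peeling argument on this near-optimal neighbourhood to dominate the regret of all near-optimal arms by that of an independent subset of $N_{i^{*}}$, which here is a single arm of gap $\Delta_{min}$, at the cost of the one extra logarithmic factor that is intrinsic to any gap-based feedback-graph analysis (the artefact already discussed after \cref{bound1}). Bounding the relevant pull count by $4\log(\sqrt2 T/\delta)/\Delta_{min}^{2}$ then gives the leading term $32(\log(\sqrt2 T))^{2}/\Delta_{min}$. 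Adding the far part, the near part and the good-event slack yields the claimed bound.

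I expect the main obstacle to be the far part: carefully relating the observation counts $O_t(\cdot)$ to the number of pulls through the clique structure of each band, and—more delicately—converting the sum $\sum_{n\ \mathrm{occupied}}1/n$ over possibly sparse band indices into $\log(2\gamma(G))$ via \cref{pro1}, since a naive bound by the largest occupied index would only give $\Delta_{max}/\epsilon$, which can be far larger than $\gamma(G)$.
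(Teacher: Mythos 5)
Your proposal is correct and follows essentially the same route as the paper's own proof: the paper likewise partitions the suboptimal arms into $\epsilon$-gap bands $[0,\epsilon),[\epsilon,2\epsilon),\dots$, handles the far bands ($\Delta_i\ge\epsilon$) by the clique/observation-counting method of \cref{bound1} to obtain the $C_3\log(\sqrt{2}T)/\epsilon$ term, and handles the near arms by applying the Lykouris-et-al.\ peeling bound to $G_{\epsilon}$, which under the similarity assumption is a complete graph, so the independent-set sum collapses to $1/\Delta_{min}$ and gives the leading $32(\log(\sqrt{2}T))^2/\Delta_{min}$ term. Your explicit band-clique accounting (pull counts shared through observations within a band, and the parity argument bounding the number of occupied bands via $\alpha(G)\le 2\gamma(G)$) is just a fleshed-out version of what the paper invokes by reference, with only immaterial constant-level differences.
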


\begin{remark}
    The gap-free upper bound of UCB-N is also $O(\sqrt{T}\log(T))$. Due to the similarity assumption, the regret upper bound of UCB-N is improved compared to \citep{lykouris2020feedback}. D-UCB and C-UCB are specifically designed for similarity feedback structures and may fail in the case of standard graph feedback settings. This is because the optimal arm may be connected to an arm with a very small mean, so the $j_t$ selected in Step 9 may not necessarily include the optimal arm. However, under the ballooning setting, UCB-N cannot achieve sublinear regret. D-UCB and C-UCB can be naturally applied in this setting and achieve sublinear regret under certain conditions.
\end{remark}

\section{Ballooning Environments}
This section considers the setting where arms are increased over time. This problem is highly challenging, as prior research relied on strong assumptions  to achieve sublinear regret. The graph feedback structure we propose is particularly effective for this setting. Intuitively, if an arrived arm has a mean very close to arms that have already been distinguished, the algorithm does not need to distinguish it further. This may lead to a significantly smaller number of truly effective arrived arms than $T$, making it easier to obtain a sublinear regret bound.

\subsection{Double-UCB for Ballooning Setting}

\begin{algorithm}[tb]
    \caption{Double-UCB for Ballooning Setting}
    \label{alg:D_UCB_BL}
 \begin{algorithmic}[1]
    \STATE {\bfseries Input:}  Horizon $T,$ $\delta \in(0,1)$
    \STATE Initialize $\mathcal{I}=\emptyset, \forall i, k(i)=0,O(i)=0$
    \FOR{$t=1$ {\bfseries to} $T$}
    \STATE Arm $a_t$ arrives
    \STATE Feedback graph $G$ is updated
    \IF{$a_t \notin N_{\mathcal{I}}$}
    \STATE $\mathcal{I}=\mathcal{I} \bigcup \{a_t\}$
    \ENDIF
    \STATE 
    $j_t= \arg\max_{j \in \mathcal{I}} \bar{\mu}(j)+\sqrt{\frac{\log(\sqrt{2}T/\delta)}{O(j)}}$
    \STATE Pulls arm $i_t= \arg \max_{i \in N_{j_t}} \bar{\mu}(i)+\sqrt{\frac{\log(\sqrt{2}T/\delta)}{O(i)}}$
    
    \STATE $\forall i \in N_{i_t},$ update $k_t(i),O_t(i),\bar{\mu}_t(i)$

    \ENDFOR
    
 \end{algorithmic}
 \end{algorithm}

\cref{alg:D_UCB_BL} shows the pseudocode of our method Double-UCB-BL. 
For any set $\mathcal{S}$, let $N_{\mathcal{S}}$ denote  the set of arms linked to $\mathcal{S}$, i.e., $N_{\mathcal{S}}=\bigcup_{i \in \mathcal{S}}N_i $. Upon the arrival of each arm, first check whether it is in $N_{\mathcal{I}}$. If it is not, add it to $\mathcal{I}$ to form a new independent set.
The construction of the independent set $\mathcal{I}$ is formed online as arms arrive, while the other parts remain entirely consistent with Double-UCB. 

\subsubsection{Regret Analysis }
Let $\mathcal{I}_t$ denote the independent set at time $t$ and $\alpha_t^{*}\in \mathcal{I}_t$ denote the arm that include the optimal arm $i_t^{*}$. Define $\mathcal{A}=\{a_t:t \in [T],\mu(a_t) \in N_{\alpha_t^{*}}\}$.
To simplify the problem, we only consider the problem instances that satisfy the following assumption:
\begin{assumption}
    \label{assumption1}
$ \forall i\neq j, \Delta_{min}^T \leq | \mu(i)-\mu(j) | \leq \Delta_{max}^T$, $\Delta_{min}^T,\Delta_{max}^T$ is some constant.
\end{assumption}

The first challenge  in ballooning settings is the potential existence of many arms with means very close to the optimal arm. That is, the set  $\mathcal{A}$ may be very large.
We first define a quantity that is easy to analyze as the expected upper bound for all arms falling into $N_{\alpha_t^{*}}$.
Define 
\begin{equation}
    \begin{aligned}
        M &= \mathbb{E}[\sum_{t=1}^{T} \mathbbm{1}\{|\mu(a_t)-\mu(i_t^{*})| < 2\epsilon)\}] \\
        &=\sum_{t=1}^{T} \mathbb{P}(|\mu(a_t)-\mu(i_t^{*})| < 2\epsilon) 
    \end{aligned}
\end{equation}
It's easy to verify that $\{ \mu(a_t) \in N_{\alpha_t^{*}} \} \subseteq \{|\mu(a_t)-\mu(i_t^{*})| < 2\epsilon)\} $. We have $\mathbb{E}[|\mathcal{A}|] \leq M.$ 

The second challenge is that our regret is likely related to the independence number, yet under the ballooning setting, the graph's independence number is a random variable. Denote the independence number as $\alpha(G_T^{\mathcal{P}})$. We attempt to address this issue by providing a high-probability upper bound for the independence number. Let $ X,Y \stackrel{i.i.d.}{\sim} \mathcal{P}$, then
\begin{equation}
    \label{p}
  p=\mathbb{P}(|X-Y| \leq \epsilon) = \int_{-\epsilon}^{\epsilon} f_{X-Y}(z)dz ,
\end{equation}
where $f_{X-Y}$ is the probability density function of $X-Y$.  Let $b=\frac{1}{1-p}$, \cref{ind_number} has proved a high-probability upper bound of $\alpha(G_T^{\mathcal{P}})$ related to $b$.

Now, we can give the following upper bound of Double-UCB.
\begin{theorem}
    \label{bound2}
    Assume that the reward distribution is $\frac{1}{2}$-subgaussian or bounded in $[0,1]$, set $\delta = \frac{1}{T},$ the regret of Double-UCB after $T$ rounds is upper bounded by
    \begin{equation}
        \begin{aligned} 
        R_T^{\pi}  &\leq 40 \max\{\log_bT, 1\}\Delta_{max}^T\frac{\log(\sqrt{2}T)}{\epsilon^2} + 2\Delta_{max}^T\\ 
        & + 4\sqrt{6TM\log(\sqrt{2}T)} + 2\epsilon +2T\epsilon e^{-M},
        \end{aligned}
    \end{equation}
  
\end{theorem}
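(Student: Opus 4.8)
The plan is to adapt the two-part regret decomposition of the stationary analysis (\cref{bound1}) to the i.i.d.\ arrival model, replacing every deterministic graph quantity by a high-probability surrogate. I would first fix the usual UCB clean event that all empirical means lie within their confidence intervals of width $\sqrt{\log(\sqrt2 T/\delta)/O(\cdot)}$; with $\delta=1/T$ a union bound bounds the complement's contribution by an additive $O(1)$. I would then condition on the event, guaranteed with high probability by \cref{ind_number}, that the online independent set $\mathcal{I}_t$ (built in Steps 6--8) never exceeds $O(\max\{\log_b T,1\})$ arms, since $|\mathcal{I}_t|\le\alpha(G_T^{\mathcal{P}})$.

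Writing $h_t=\max_{i\in N_{j_t}}\mu(i)$, I would decompose $\mu(i_t^{*})-\mu(i_t)=\big(\mu(i_t^{*})-\mu(h_t)\big)+\big(\mu(h_t)-\mu(i_t)\big)$. The first gap (coverage loss) is positive only when $N_{j_t}$ misses the optimal arm, i.e.\ when $j_t\neq\alpha_t^{*}$. A structural fact I would prove at the outset is that $\alpha_t^{*}$ always attains the largest mean in $\mathcal{I}_t$: because $|\mu(\alpha_t^{*})-\mu(i_t^{*})|<\epsilon$ while any two members of an independent set differ by at least $\epsilon$, no other member can exceed $\mu(\alpha_t^{*})$. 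Hence the first UCB is ordinary UCB with target $\alpha_t^{*}$, each suboptimal member of $\mathcal{I}_t$ is selected at most $O(\log(\sqrt2 T)/\epsilon^{2})$ times (the separation $\ge\epsilon$ plays the role of $1/\Delta^2$), and each such round costs at most $\Delta_{max}^T$. Multiplying by the $O(\max\{\log_b T,1\})$ bound on $|\mathcal{I}_t|$ reproduces the first term.

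For the second gap I would restrict to rounds with $j_t=\alpha_t^{*}$, where every arm in $N_{\alpha_t^{*}}$ lies within $2\epsilon$ of $\mu(i_t^{*})$, so its per-round cost is below $2\epsilon$ and the number of distinct such arms is controlled by $|\mathcal{A}|$ with $\mathbb{E}[|\mathcal{A}|]\le M$. On the event $\{|\mathcal{A}|\le eM\}$ I would feed these near-optimal arms into the gap-free UCB bound to obtain regret $O(\sqrt{MT\log(\sqrt2 T)})$, yielding the term $4\sqrt{6TM\log(\sqrt2 T)}$; on the complementary event, whose probability is at most $e^{-M}$ by a Chernoff bound exploiting the independence of the arrivals, I would bound each of the at most $T$ near-optimal rounds crudely by $2\epsilon$, producing the residual $2T\epsilon e^{-M}$. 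The leftover constants $2\Delta_{max}^T$ and $2\epsilon$ absorb the clean-event failure, the final exploration round, and the contribution of the rare event where the independence number exceeds its bound.

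The hardest part will be the three coupled sources of randomness absent from the stationary proof: $\mathcal{I}_t$ is constructed online and grows, the optimal arm $i_t^{*}$ and thus its covering arm $\alpha_t^{*}$ drift over time, and the counts $\alpha(G_T^{\mathcal{P}})$ and $|\mathcal{A}|$ are themselves random. Justifying the first-UCB argument against a moving target set, and especially showing that the gap-free second-UCB bound survives when the candidate set $N_{\alpha_t^{*}}$ both changes with $t$ and is correlated with the sampled means, is the delicate step; converting the random $|\mathcal{A}|$ into the clean $M$-dependence together with the $e^{-M}$ correction is the crux that \cref{assumption1} and the i.i.d.\ arrivals are meant to enable.
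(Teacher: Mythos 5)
Your proposal follows essentially the same route as the paper's proof: the same decomposition into rounds with $j_t\neq\alpha_t^{*}$ (each charged at most $\Delta_{max}^T$, with pull counts of suboptimal members of $\mathcal{I}_t$ bounded via the $\epsilon$-separation of the independent set and the high-probability independence-number bound of \cref{ind_number}) and rounds with $j_t=\alpha_t^{*}$ (near-optimal arms within $2\epsilon$ of $\mu(i_t^{*})$, whose count is controlled by $M$ through a Chernoff bound, fed into a gap-free UCB argument of order $\sqrt{TM\log(\sqrt{2}T)}$, with the crude $2\epsilon T$ bound on the $e^{-M}$-probability bad event). The ``crux'' you leave open---the moving optimum and the coupling between arrivals and candidate sets---is resolved in the paper by conditioning on the entire arrival sequence $\mathcal{F}$, proving the regret bound pointwise in $\mathcal{F}$ as a function of $M'$ and $|\mathcal{I}_T|$ after partitioning $[T]$ into maximal intervals on which $i_t^{*}$ is constant, and only then integrating over $\mathcal{F}$ by splitting on the events you describe.
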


If $\mathcal{P}$ is Gaussian distribution, we have the following corollary:
\begin{corollary}
    \label{corollary1}
    If $\mathcal{P}$ is the Gaussian distribution $\mathcal{N}(0,1)$, 
    we have $M=O(\log(T)e^{2\epsilon\sqrt{2\log(T)}})$ and $M=\Omega(\log(T)\sqrt{\log(T)})$. 
    The asymptotic regret upper bound is of order \[O\Big(\log(T)\sqrt{Te^{2\epsilon\sqrt{2\log(T)}}}\Big).\] 
    The order of $e^{\sqrt{2\log(T)}} $ is smaller than any power of $T$. For example, if $T>e^n$,$n$ is a positive integer, we have $e^{\sqrt{2\log(T)}} \leq T^{\sqrt{2/n}} $. 
\end{corollary}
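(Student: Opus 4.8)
The whole corollary reduces to a sharp two-sided estimate of the single quantity $M=\sum_{t=1}^{T}\mathbb{P}(|\mu(a_t)-\mu(i_t^{*})|<2\epsilon)$: once this is in hand, the regret statement follows by substituting into \cref{bound2} and reading off the dominant term, and the sub-polynomial assertion is an elementary comparison. The key structural fact is that at round $t$ the means $\mu(a_1),\dots,\mu(a_t)$ are i.i.d.\ $\mathcal{N}(0,1)$, so $\mu(i_t^{*})=\max_{s\le t}\mu(a_s)$ is the maximum of $t$ standard Gaussians, which by classical extreme-value theory concentrates about $b_t:=\sqrt{2\log t}$. I would first split the event according to whether the fresh arm attains the maximum,
\[
\mathbb{P}(|\mu(a_t)-\mu(i_t^{*})|<2\epsilon)=\tfrac{1}{t}+\mathbb{P}\big(0<\mu(i_t^{*})-\mu(a_t)<2\epsilon\big),
\]
where the first term equals $\mathbb{P}(\mu(a_t)=\mu(i_t^{*}))=1/t$ by exchangeability, and on the second event the maximum is attained by an earlier arm, so $\mu(i_t^{*})$ may be replaced by $M':=\max_{s<t}\mu(a_s)$, which is independent of $\mu(a_t)$.

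For the window term I would evaluate $\mathbb{E}[\Phi(M')-\Phi(M'-2\epsilon)]$ via the Gaussian density near $b_t$. Since $\phi$ is decreasing, $2\epsilon\phi(m)\le\Phi(m)-\Phi(m-2\epsilon)\le 2\epsilon\phi(m-2\epsilon)$, and using $\phi(b_t)=\tfrac{1}{\sqrt{2\pi}\,t}$ together with $\phi(m-2\epsilon)=\phi(m)e^{2\epsilon m-2\epsilon^2}$, the upper side is of order $t^{-1}e^{2\epsilon\sqrt{2\log t}}$. Summing and passing to $\int_2^{T}x^{-1}e^{2\epsilon\sqrt{2\log x}}\,dx$, the substitutions $u=\log x$ then $v=\sqrt{2u}$ turn this into $\int_0^{\sqrt{2\log T}}v\,e^{2\epsilon v}\,dv=O(\sqrt{\log T}\,e^{2\epsilon\sqrt{2\log T}})$, dominated by the stated $M=O(\log T\,e^{2\epsilon\sqrt{2\log T}})$. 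For the lower bound I would instead retain $2\epsilon\phi(m)$ and evaluate $\mathbb{E}[\phi(M')]$ by a Laplace/extreme-value estimate; because the maximum fluctuates on scale $\Theta(1/\sqrt{\log t})$ about $b_t$, this expectation is $\Theta(\sqrt{\log t}/t)$ rather than $\Theta(1/t)$, so summation gives $\sum_t\Theta(\sqrt{\log t}/t)=\Theta((\log T)^{3/2})$, i.e.\ $M=\Omega(\log T\sqrt{\log T})$.

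With the estimate for $M$ in place I would substitute into \cref{bound2} and compare terms. The dominant contribution is $4\sqrt{6TM\log(\sqrt2 T)}$: inserting $M=O(\log T\,e^{2\epsilon\sqrt{2\log T}})$ gives $\sqrt{T(\log T)^2 e^{2\epsilon\sqrt{2\log T}}}=\log T\sqrt{T e^{2\epsilon\sqrt{2\log T}}}$, exactly the claimed order. The leading term $40\max\{\log_b T,1\}\Delta_{max}^{T}\tfrac{\log(\sqrt2 T)}{\epsilon^2}$ is only $O((\log T)^2)$ because $b$ is a fixed constant, and for the tail term $2T\epsilon e^{-M}$ the lower bound $M=\Omega((\log T)^{3/2})$ yields $Te^{-M}=e^{\log T-\Omega((\log T)^{3/2})}\to0$; both, together with the $O(1)$ terms, are negligible. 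The final sub-polynomial claim is immediate: if $T\ge e^{n}$ then $\log T\ge n$, so $\sqrt{2\log T}\le\sqrt{2/n}\,\log T$ and hence $e^{\sqrt{2\log T}}\le T^{\sqrt{2/n}}$, an exponent that can be made arbitrarily small.

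I expect the only real obstacle to be the estimate of $M$, i.e.\ controlling the maximum of $t$ Gaussians beyond first order. The density $\phi$ varies by the non-negligible factor $e^{2\epsilon\sqrt{2\log t}}$ across the width-$2\epsilon$ window, so it cannot be frozen at a single value, and the $\Theta(\sqrt{\log t})$ fluctuation scale of the running maximum $M'$ must be tracked (integrating against its extreme-value density rather than replacing it by $b_t$). Pinning down these logarithmic factors is precisely what separates the $O(\log T\,e^{2\epsilon\sqrt{2\log T}})$ upper estimate from the $\Omega((\log T)^{3/2})$ lower estimate; everything downstream is routine substitution and asymptotic comparison.
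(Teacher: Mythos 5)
Your proposal reaches the correct conclusions, and its outer architecture (two-sided estimate of $M$, substitution into \cref{bound2}, noting that $4\sqrt{6TM\log(\sqrt2 T)}$ dominates and that the lower bound on $M$ is what kills the $2T\epsilon e^{-M}$ term) matches the corollary; but your route to the estimates on $M$ is genuinely different from the paper's. The paper never invokes extreme-value theory: it sums $\sum_{t\le T} t F(x)^{t-1}$ in closed form to write $M$ as a single integral $\int \frac{1-(T+1)F^T+TF^{T+1}}{(1-F)^2}\,\big(F(x+2\epsilon)-F(x-2\epsilon)\big)F'(x)\,dx$, then bounds that one integral directly — splitting at $x=1$ and $x=\sqrt{2\log T}+\epsilon$ for the upper bound, and integrating $\phi(x)^2/(1-\Phi(x))^2 \gtrsim x^2$ over $[1+2\epsilon,\sqrt{\log T}]$ for the lower bound — using only the elementary Gaussian tail inequalities of \cref{Gaussian}. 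You instead keep the sum over $t$, split off the probability $1/t$ that the newcomer is itself the maximum, and estimate per-round expectations $\mathbb{E}[\phi(M'-2\epsilon)]$ and $\mathbb{E}[\phi(M')]$ against the law of the running maximum. Your decomposition is actually cleaner about which event is being measured (the paper's conditional-probability formula is slightly loose on ties/exceedances), and it explains where each logarithmic factor comes from; the paper's computation is more elementary and self-contained, with every step an explicit inequality.

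One step of your upper bound needs repair — and you flagged it yourself in your closing paragraph. Replacing $\mathbb{E}[\phi(M'-2\epsilon)]$ by $\phi(b_t-2\epsilon)\approx t^{-1}e^{2\epsilon\sqrt{2\log t}}$ is not a valid upper bound: the running maximum typically sits \emph{below} $b_t$ (at $b_t-\Theta(\log\log t/\sqrt{\log t})$), where $\phi$ is larger, so in fact $\mathbb{E}[\phi(M'-2\epsilon)]=\Theta\big(\sqrt{\log t}\,t^{-1}e^{2\epsilon\sqrt{2\log t}}\big)$ — exactly the $\sqrt{\log t}$ discrepancy you identify as the gap between your upper and lower estimates. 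The repair is harmless for the corollary: summing the corrected per-round bound gives $\sum_t \sqrt{\log t}\,t^{-1}e^{2\epsilon\sqrt{2\log t}} \asymp \int_0^{\sqrt{2\log T}} v^2 e^{2\epsilon v}\,dv = O\big(\log T\, e^{2\epsilon\sqrt{2\log T}}\big)$, which is precisely the bound claimed in \cref{corollary1} (and is the same order the paper obtains); so your plan goes through once the per-round expectation is integrated against the density of $M'$, as you propose, rather than frozen at $b_t$. The downstream substitutions, including $\log_b T=O(\log T)$ since $b$ is a fixed constant for $\mathcal{N}(0,1)$, and the inequality $e^{\sqrt{2\log T}}\le T^{\sqrt{2/n}}$ for $T\ge e^n$, are correct as written.
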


{\bfseries{Lower Bounds of Double-UCB}} Define $\mathcal{B}= \{a_t: t \in [T], \frac{\epsilon}{2}< \mu(i_t^{*})-\mu(a_t) < \epsilon\}$. Then we have $\mathcal{B} \subset \mathcal{A}$.
Let 
\[ B= \mathbb{E}\Big[ \sum_{t=1}^{T} \mathbbm{1}\{ \frac{\epsilon}{2}< \mu(i_t^{*})-\mu(a_t) < \epsilon\}  \Big] .\]
We have $\mathbb{E}[|\mathcal{B}|] = B $. If arm $a_t$ arrives and falls into set $\mathcal{B}$ at round $t$, then the arm will be selected at least once, unless the algorithm select $j_t \neq \alpha_t^{*} $ in Step 9. Note that if an arm in $\mathcal{B}$ is selected at some round, the resulting regret is at least $\frac{\epsilon}{2}$. Once we estimate the size of $|\mathcal{B}|$ and the number of rounds with $j_t \neq \alpha_t^{*}$, a simple regret lower bound can be obtained.
We have the following lower bound:
\begin{theorem}
    \label{bound4}
    Assume that the reward distribution is $\frac{1}{2}$-subgaussian or bounded in $[0,1]$, set $\delta = \frac{1}{T},$ the regret lower bound of Double-UCB  is 
    \begin{equation}
        \begin{aligned} 
        R_T^{\pi}  \geq \frac{B\epsilon}{4}(1-e^{-B/8}) - 20 \max\{\log_bT, 1\}\frac{\log(\sqrt{2}T)}{\epsilon} -\epsilon
        \end{aligned}
    \end{equation}

\end{theorem}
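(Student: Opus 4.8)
The plan is to bound the regret from below by counting only the regret generated when Double-UCB-BL pulls arms that landed in $\mathcal{B}$. Every $a \in \mathcal{B}$ satisfies $\mu(i_t^*)-\mu(a) > \tfrac{\epsilon}{2}$ at its arrival round, and since the running optimum $\mu(i_s^*)=\max_{r \le s}\mu(a_r)$ is nondecreasing in $s$, any pull of such an arm at any later round contributes instantaneous regret larger than $\tfrac{\epsilon}{2}$. Writing $P$ for the number of distinct arms of $\mathcal{B}$ that are pulled at least once, this yields the clean starting point
\[
 R_T^{\pi} \;\ge\; \frac{\epsilon}{2}\,\mathbb{E}[P].
\]

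First I would pin down the pulling mechanism. Because $\mathcal{B}\subset\mathcal{A}$, every $a_t\in\mathcal{B}$ lies in $N_{\alpha_t^*}$, and since $\alpha_t^*\in\mathcal{I}_t$ this forces $a_t\in N_{\mathcal{I}}$, so $a_t$ is never promoted into the independent set. Consequently $a_t$ is a fresh neighbor of $\alpha_t^*$: at its arrival round its confidence radius $\sqrt{\log(\sqrt2 T/\delta)/O(a_t)}$ is maximal, so whenever Step 9 selects the correct representative $j_t=\alpha_t^*$ the arm $a_t$ is chosen in Step 10 and pulled. Hence an arm of $\mathcal{B}$ can fail to be pulled only at a round where $j_t\neq\alpha_t^*$; mapping each unpulled arm to its (distinct) arrival round gives $P \ge |\mathcal{B}| - N_{\mathrm{fail}}$ with $N_{\mathrm{fail}}=|\{t: j_t\neq \alpha_t^*\}|$.

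Next I would control $N_{\mathrm{fail}}$ and $|\mathcal{B}|$. For the former I reuse the Step-9 analysis underlying the first term of \cref{bound2}: inside the independent set all pairwise mean gaps exceed $\epsilon$, so standard UCB concentration shows each wrong representative is selected at most $O(\log(\sqrt2 T)/\epsilon^2)$ times, while the high-probability independence-number bound of \cref{ind_number} caps the number of representatives at $O(\max\{\log_b T,1\})$; together these give $\mathbb{E}[N_{\mathrm{fail}}] \le 40\max\{\log_b T,1\}\tfrac{\log(\sqrt2 T)}{\epsilon^2}$. For $|\mathcal{B}|$, a multiplicative Chernoff bound around its mean $B=\mathbb{E}[|\mathcal{B}|]$ gives $\mathbb{P}(|\mathcal{B}|\ge B/2)\ge 1-e^{-B/8}$, which is the source of the $(1-e^{-B/8})$ factor. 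Restricting to this event and combining,
\[
 R_T^{\pi} \ge \frac{\epsilon}{2}\Big(\tfrac{B}{2}(1-e^{-B/8}) - \mathbb{E}[N_{\mathrm{fail}}]\Big),
\]
which reproduces the leading term $\tfrac{B\epsilon}{4}(1-e^{-B/8})$, the subtracted $20\max\{\log_b T,1\}\tfrac{\log(\sqrt2 T)}{\epsilon}$, and an $O(\epsilon)$ boundary correction.

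The main obstacle I anticipate is making the deterministic pulling guarantee airtight together with the accompanying concentration. The events $\{a_t\in\mathcal{B}\}$ are not independent across $t$, since $i_t^*$ depends on the entire arrival history, so both the Chernoff concentration of $|\mathcal{B}|$ and the charging of unpulled arms to failure rounds must be justified with respect to the arrival filtration rather than by naive independence. The most delicate single point is guaranteeing that a fresh $a_t\in N_{\alpha_t^*}$ is actually the Step-10 argmax whenever $j_t=\alpha_t^*$ — this depends on the tie-breaking and initialization convention for unobserved arms, and it is precisely this step (rather than the concentration or the reuse of \cref{bound2}) that dictates whether one recovers a clean bound or must settle for the high-probability $(1-e^{-B/8})$ form stated in the theorem.
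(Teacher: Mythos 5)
Your proposal is correct and follows essentially the same route as the paper's (sketched) argument: charge regret $\epsilon/2$ to each pulled arm of $\mathcal{B}$, note such an arm is pulled unless Step 9 picks $j_t\neq\alpha_t^{*}$ at its arrival round, bound the wrong-representative rounds by reusing the part-(A) analysis of \cref{bound2} together with \cref{ind_number}, and lower-bound $|\mathcal{B}|$ by the Chernoff bound of \cref{chernoff}, yielding exactly the $\frac{B\epsilon}{4}(1-e^{-B/8})$ and $20\max\{\log_bT,1\}\frac{\log(\sqrt{2}T)}{\epsilon}$ terms. Your added caveats (tie-breaking for unobserved arms in Step 10, and the fact that the indicators defining $B$ are not literally independent) are genuine subtleties that the paper glosses over rather than deviations from its approach.
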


If $\mathcal{P}$ is the uniform distribution $U(0,1)$, we can calculate that $B \geq \frac{(1-\epsilon)\epsilon}{2}T$.  If $\mathcal{P}$ is the half-triangle distribution with probability density function as 
$ f(x)=2(1-x) \mathbbm{1}\{0 < x < 1\}$. 
We can also calculate $B \geq \frac{3\epsilon^2(1-\epsilon)^2}{4}T $. Therefore, 
their regret lower bounds  are  of linear order.

 \begin{algorithm}[tb]
    \caption{C-UCB for Ballooning Setting}
    \label{alg:C_UCB_BL}
 \begin{algorithmic}[1]
    \STATE {\bfseries Input:}  Horizon $T, $ $\delta \in (0,1)$
    \STATE Initialize $\mathcal{I}=\emptyset, \forall i, k(i)=0,O(i)=0$
    \FOR{$t=1$ {\bfseries to} $T$}
    \STATE Steps 4-9 in Double-UCB-BL
    \STATE Pulls arm $i_t= \arg \max_{i \in N_{j_t}} \bar{\mu}(i)-\sqrt{\frac{\log(\sqrt{2}T/\delta)}{O(i)}}$
    \STATE $\forall i \in N_{i_t},$ update $k_t(i),O_t(i),\bar{\mu}_t(i)$
    \ENDFOR
 \end{algorithmic}
 \end{algorithm}

\subsection{C-UCB for Ballooning Setting}
The failure on the common uniform distribution limits the D-UCB's application. The fundamental reason is the aggressive exploration strategy of the UCB algorithm, which tries to explore every arm that enters set $\mathcal{A}$ as much as possible.  
In this section, we apply C-UCB to ballooning settings, which imposes no requirements on the distribution $\mathcal{P}$.

\cref{alg:C_UCB_BL} shows the pseudocode of conservative UCB (C-UCB). This algorithm is almost identical to D-UCB, with the only change being that the rule for selecting an arm is $ \arg \max_{i \in N_{j_t}} \bar{\mu}(i)-\sqrt{\frac{\log(\sqrt{2}T/\delta)}{O(i)}}$.  This improvement avoids exploring every arm that enters $N_{j_t}$. It is only chosen if it has been observed a sufficient number of times and its confidence lower bound is the largest. 
We have the following regret upper bound for C-UCB:

\begin{theorem}
    \label{bound3}
    Assume that the reward distribution is $\frac{1}{2}$-subgaussian or bounded in $[0,1]$, set $\delta = \frac{1}{T},$ the regret of C-UCB is upper bounded by
    \begin{equation}
        \begin{aligned}
        R_T^{\pi}  &\leq 40\max \{\log_bT,1\}\Delta_{max}^T\frac{\log(\sqrt{2}T)}{\epsilon^2} + 2\Delta_{max}^T \\
        &+ \frac{96\epsilon (\log(eT))^2}{(\Delta_{min}^T)^2}   + 4\epsilon\\
        \end{aligned}
    \end{equation}
\end{theorem}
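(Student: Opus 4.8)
The plan is to split the regret along exactly the two contributions visible in the statement: the \emph{outer} regret incurred on rounds where the independent-set leader $j_t$ differs from $\alpha_t^{*}$ (so that $i_t^{*}\notin N_{j_t}$), and the \emph{inner} regret incurred on rounds where $j_t=\alpha_t^{*}$ but a suboptimal neighbor of $\alpha_t^{*}$ is pulled. Writing $R_T^{\pi}=R^{(1)}+R^{(2)}$ accordingly, the first two displayed terms will come from $R^{(1)}$ and the term $\frac{96\epsilon(\log(eT))^2}{(\Delta_{min}^T)^2}$ from $R^{(2)}$, with $4\epsilon$ absorbing the low-probability failure of the concentration events.

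For $R^{(1)}$ I would reuse the outer-loop analysis behind \cref{bound2} essentially verbatim. The online construction of $\mathcal{I}$ and the UCB selection of $j_t$ are identical in \cref{alg:D_UCB_BL} and \cref{alg:C_UCB_BL}, and since every pulled arm $i_t\in N_{j_t}$ is adjacent to $j_t$ (self-loop together with the edge $i_t\leftrightarrow j_t$), the leader $j_t$ is observed on every round it is chosen. Hence the standard UCB argument over the $\epsilon$-separated arms of $\mathcal{I}_t$ applies unchanged: each suboptimal leader is played $O(\log(\sqrt2 T)/\epsilon^2)$ times, the number of leaders is controlled by the high-probability independence-number bound of \cref{ind_number} (the $\max\{\log_b T,1\}$ factor), and each such round costs at most $\Delta_{max}^T$ under \cref{assumption1}. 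This reproduces $40\max\{\log_b T,1\}\Delta_{max}^T\frac{\log(\sqrt2 T)}{\epsilon^2}+2\Delta_{max}^T$.

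The heart of the proof is $R^{(2)}$, where the conservative lower-confidence selection replaces the aggressive one. On a round with $j_t=\alpha_t^{*}$, every arm of $N_{\alpha_t^{*}}$ lies within $2\epsilon$ of $i_t^{*}$, so each such round costs at most $2\epsilon$. As in the stationary argument of \cref{bound5}, the concentration events \cref{tmp0} and \cref{tmp1} guarantee that once the current optimal arm $i_t^{*}$ has accumulated at least $\tfrac{4\log(\sqrt2 T/\delta)}{(\Delta_{min}^T)^2}$ observations, no suboptimal neighbor is selected with high probability. The crucial structural fact is the climbing property: any arm of $N_{\alpha_t^{*}}$ with mean in $(\mu(\alpha_t^{*}),\mu(i_t^{*}))$ is $\epsilon$-close to $i_t^{*}$ and therefore observes it when pulled, while the conservative index stops the algorithm from repeatedly sampling arms of mean below $\mu(\alpha_t^{*})$ once $\alpha_t^{*}$ itself is well observed. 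Thus the required observations of $i_t^{*}$ are acquired within $O(\log(\sqrt2 T)/(\Delta_{min}^T)^2)$ pulls.

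The main obstacle, and the source of the extra logarithm relative to \cref{bound5}, is that $i_t^{*}$ is \emph{not fixed} in the ballooning model: whenever a newly arrived arm becomes the running maximum, the observation count for the ``optimal arm'' must be rebuilt and the per-phase inner regret of $O\!\big(\epsilon\log(\sqrt2 T)/(\Delta_{min}^T)^2\big)$ is paid again. I would bound the number of such phases by the number of record maxima in the i.i.d.\ sequence $\mu(a_1),\dots,\mu(a_T)$, whose expectation is the harmonic sum $H_T\le\log(eT)$; multiplying the per-phase regret by $O(\log(eT))$ yields the $(\log(eT))^2$ factor and the term $\frac{96\epsilon(\log(eT))^2}{(\Delta_{min}^T)^2}$. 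The delicate points I expect to work out carefully are (i) that a record change restarts only the inner learning for the affected neighborhood, so the phases may be summed as an upper bound even if partial observations are discarded, and (ii) the union bound over $t$ controlling the failure of \cref{tmp0} and \cref{tmp1}, which the choice $\delta=1/T$ reduces to the constant residual $4\epsilon$.
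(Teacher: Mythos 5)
Your proposal is correct and follows essentially the same route as the paper's proof: the same split into the outer regret (rounds with $j_t\neq\alpha_t^{*}$, handled by reusing the \cref{bound2} analysis with \cref{ind_number}) and the inner regret (handled by cutting time into phases at the record maxima of the i.i.d.\ sequence $\mu(a_1),\dots,\mu(a_T)$, paying $O\bigl(\epsilon\log(\sqrt{2}T)/(\Delta_{min}^T)^2\bigr)$ per phase via the conservative index, with roughly $\log(eT)$ phases). The only cosmetic differences are that the paper controls the random number of phases by a Chernoff bound rather than by its expectation $H_T$, and formalizes your ``climbing property'' by splitting $N_{\alpha_t^{*}}$ into the two cliques of arms below and above $\mu(\alpha_t^{*})$; neither changes the substance or the order of the bound.
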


The arms arrive one by one in ballooning setting, and the optimal arm may change over time. Therefore, the regret upper bound depends on $\Delta_{min}^T$ rather than $\Delta_{2\epsilon}$.
Compared to \cref{bound2}, the upper bound for C-UCB does not involve $M$, making it independent of the distribution $\mathcal{P}$. Under the conditions of \cref{assumption1}, it achieves a regret upper bound of $O((\log(T))^2)$.


\begin{figure*}[!htbp] 
    \centering 
    \subfigure[]{ 
        \begin{minipage}[b]{0.45 \textwidth} 
            \centerline{	\includegraphics[width=7cm]{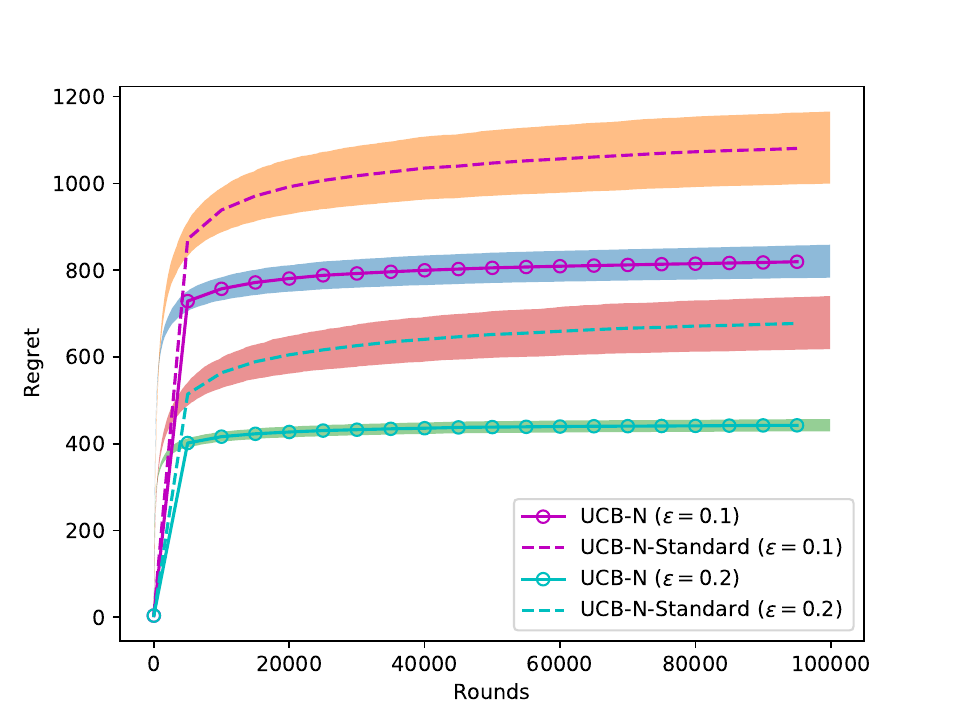}}
           
        \end{minipage} 
    } 
    \subfigure[]{
        \begin{minipage}[b]{0.45\textwidth}
            \centerline{	\includegraphics[width=7cm]{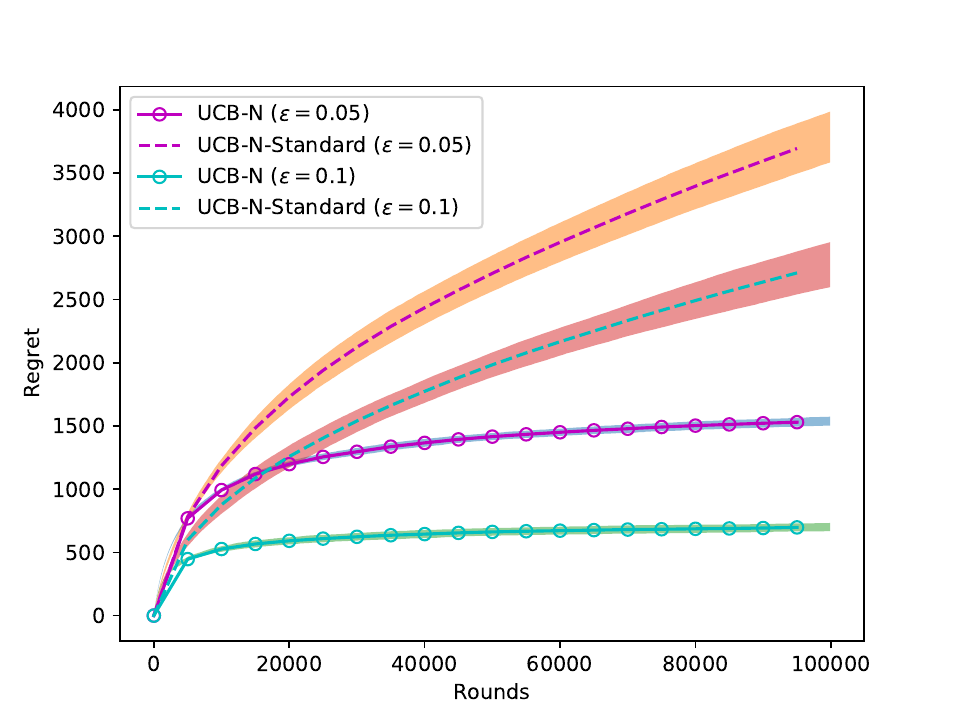} }
        \end{minipage} 	
    }
    \caption{
    "UCB-N ($\epsilon=0.1$)": Graph feedback \textbf{with} similarity structure.
    "UCB-N-Standard ($\epsilon=0.1$)": Graph feedback \textbf{without} similarity structure, but the graph used has roughly the same independence number with the former setting. 
    Settings with $T=10^5,K=10^4$. (a) $\epsilon= 0.1,0.2$ for Gaussian rewards. (b) $\epsilon=0.05,0.1 $ for Bernoulli rewards.  } 
    \label{fig:1}
\end{figure*}
\section{Experiments}
\subsection{Stationary Settings}

We first compared the performance of UCB-N under standard graph feedback and graph feedback with similar arms \footnote{Our code is available at  \url{https://github.com/qh1874/GraphBandits_SimilarArms}}. The purpose of this experiment is to show that the similarity structure improves the performance of the original UCB algorithm. To ensure fairness, the problem instances we use in both cases have  roughly  the same independence number. In the standard graph feedback, we also use a random graph, generating edges with a probability calculated by \cref{p}. The graph generated in this way has roughly the same independence number as the graph in the $\epsilon$-similarity setting. In particular, if $\mathcal{P}$ is the  Gaussian distribution $\mathcal{N}(0,1)$, then 
\[p= \sqrt{2}(2\Phi(\frac{\epsilon}{\sqrt{2}})-1).\]
If $\mathcal{P}$ is the uniform distribution $U(0,1)$, 
\[ p = 1-(1-\epsilon)^2.\]
For each value of $\epsilon$, we generate $50$ different problem instances. The expected regret is averaged on the $50$ instances. The 95\% confidence interval  is shown as a semi-transparent region in the figure.
\cref{fig:1} shows the performance of UCB-N under Gaussian rewards. 
It can be observed that the regret of UCB-N in our settings is smaller than standard graph feedback, thanks to the similarity structure. Additionally, the regret decreases as  $\epsilon$ increases, consistent with theoretical results. 

\begin{figure}[!htbp] 
    \centering 
    \subfigure[]{ 
        \begin{minipage}[b]{0.22 \textwidth} 
            \centerline{	\includegraphics[width=4.2cm]{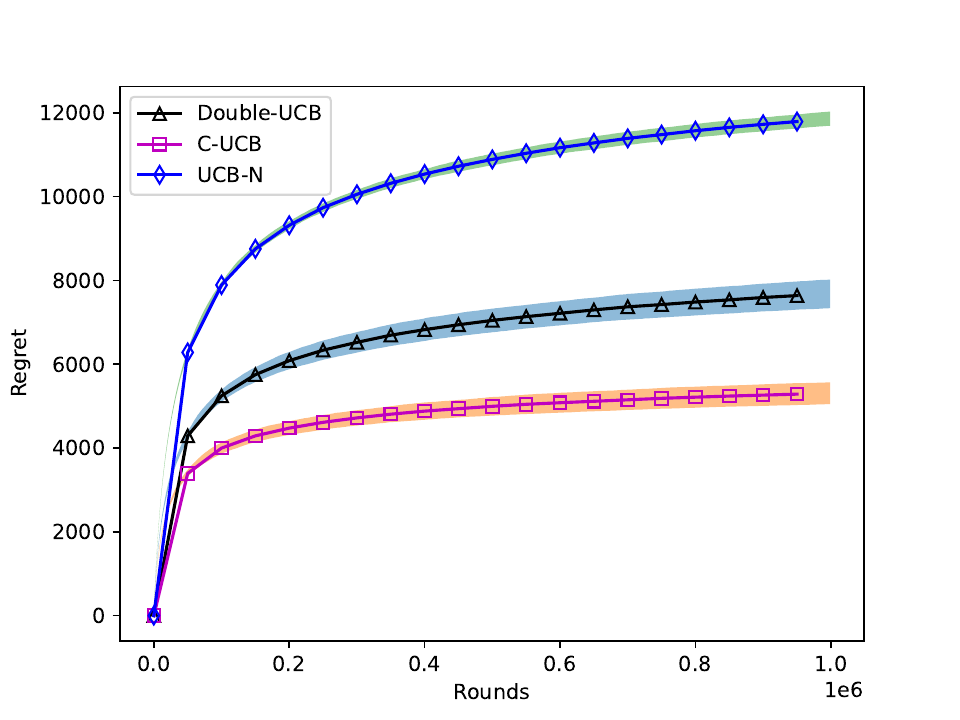}}
        \end{minipage} 
    } 
    \subfigure[]{
        \begin{minipage}[b]{0.22\textwidth}
            \centerline{	\includegraphics[width=4.2cm]{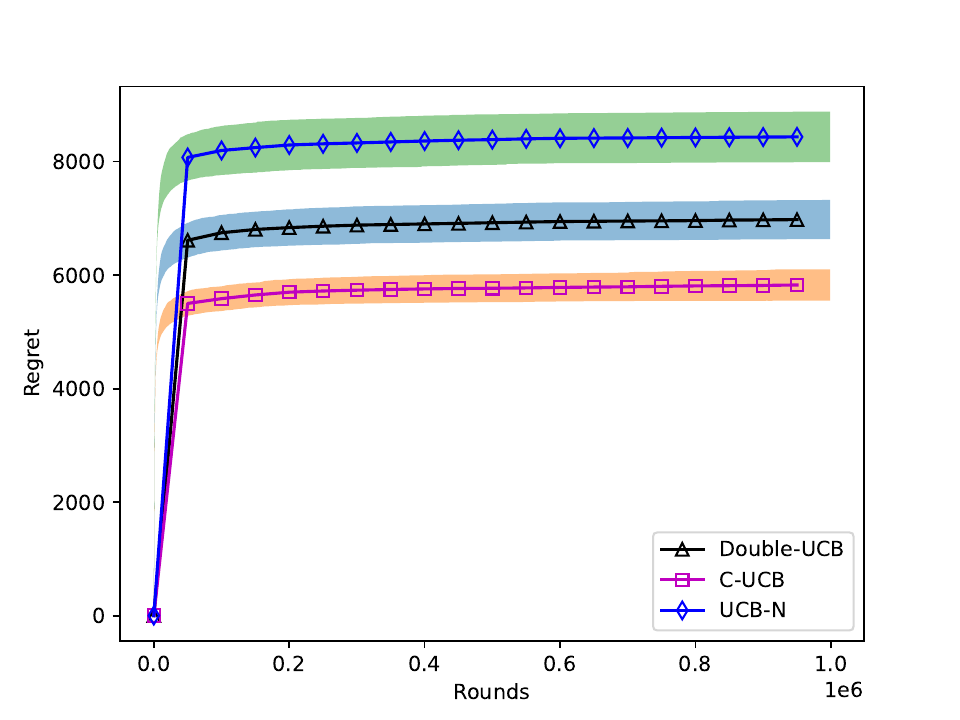} }
        \end{minipage} 	
    }
    \caption{Settings with $T=10^6,K=10^5,\epsilon=0.01$.  Bernoulli rewards (a), Gaussian rewards (b).   } 
    \label{fig:2}
\end{figure}

We then compared the performance of UCB-N, D-UCB, and C-UCB algorithms.  \cref{fig:2} shows the performance of the three algorithms with Gaussian and Bernoulli rewards. Although D-UCB and UCB-N have similar regret bounds, the experimental performance of D-UCB and C-UCB is better than UCB-N. This may be because D-UCB and C-UCB directly learn on an independent set, effectively leveraging the graph structure features of similar arms.

\subsection{Ballooning Settings}
\begin{figure*}[!htbp] 
    \centering 
    \subfigure[]{ 
        \begin{minipage}[b]{0.31 \textwidth} 
            \centerline{	\includegraphics[width=5.5cm]{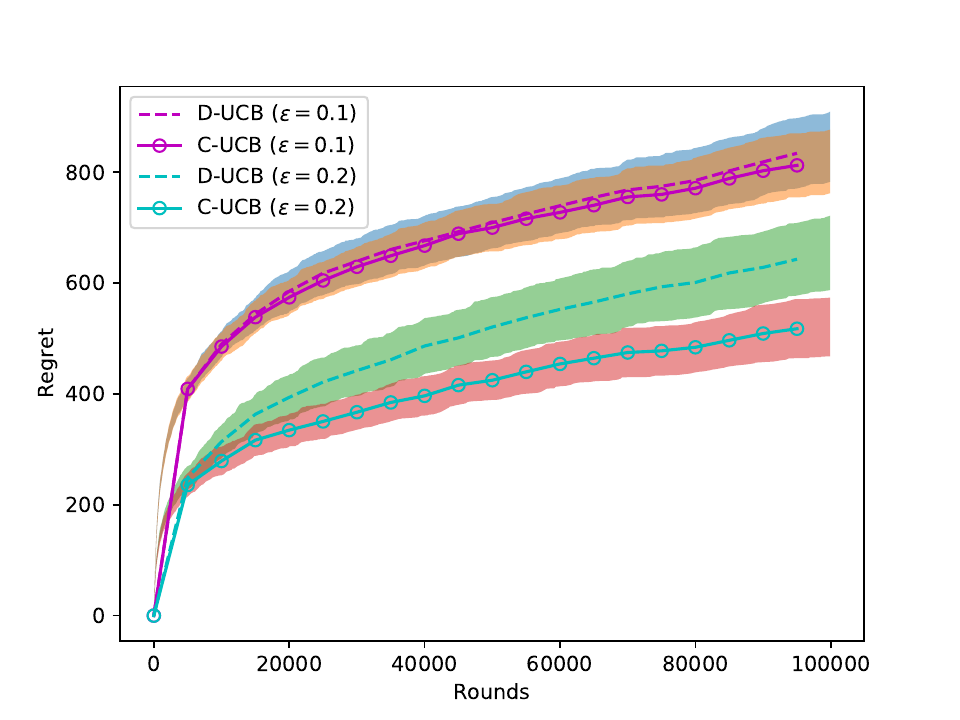}}
        \end{minipage} 
    } 
    \subfigure[]{
        \begin{minipage}[b]{0.31\textwidth}
            \centerline{	\includegraphics[width=5.5cm]{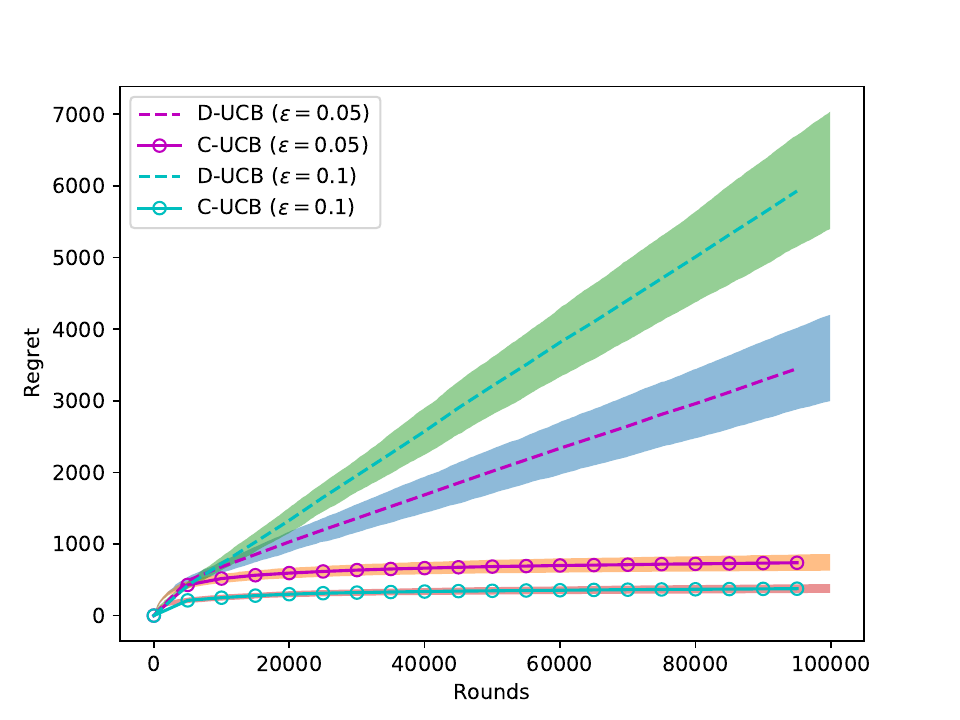} }
        \end{minipage} 	
    }
    \subfigure[]{
        \begin{minipage}[b]{0.31\textwidth}
            \centerline{	\includegraphics[width=5.5cm]{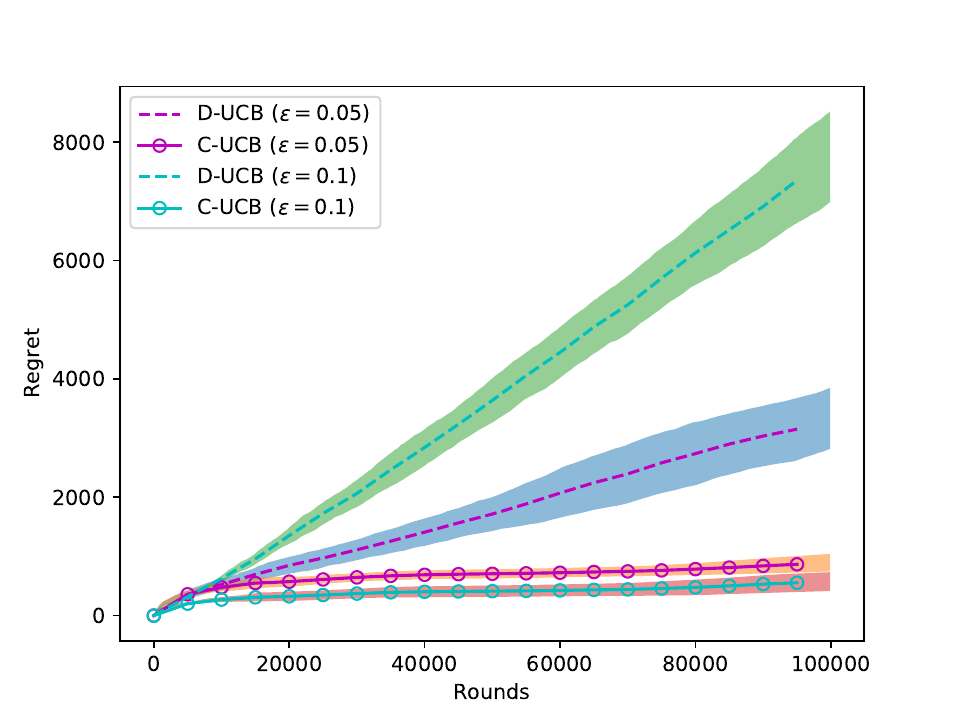} }
        \end{minipage} 	
    }
    \caption{ Ballooning Setting. (a) Gaussian arms with $\mathcal{P}$ as $\mathcal{N}(0,1)$ and $\epsilon=0.1,0.2$. (b) Bernoulli arms with $\mathcal{P}$ as $U(0,1)$ and $\epsilon=0.05,0.1$. (c) Bernoulli arms with $\mathcal{P}$ being the half-triangle distribution and $\epsilon=0.05,0.1$. }
    \label{fig:3}
\end{figure*}
UCB-N is not suitable for ballooning settings since it would select each arrived arm at least once. The BL-Moss  algorithm \citep{ghalme2021ballooning} is specifically designed for the ballooning setting. However, this algorithm assumes that the optimal arm is more likely to appear in the early rounds and requires prior knowledge of the parameter $\lambda$ to characterize this likelihood, which is not consistent with our setting. We only compare D-UCB and C-UCB with different distributions $\mathcal{P}$. 

\cref{fig:3}  show the experimental results 
of ballooning settings.
When $\mathcal{P}$ follows a standard normal distribution, D-UCB and C-UCB exhibit similar performance. However, when $\mathcal{P}$ is a uniform distribution $U(0,1)$ or half-triangle distribution with distribution function as $1-(1-x)^2$, D-UCB fails to achieve sublinear regret, while C-UCB still performs well.  These results are consistent across different values of $\epsilon$. 

		
\section{Conclusion}
In this paper, we have introduced a new graph feedback bandit model with similar arms. For this model, we proposed two different UCB-based algorithms (D-UCB, C-UCB) and provided regret upper bounds. We then extended these two algorithms to the ballooning setting. In this setting, the application of C-UCB is more extensive than D-UCB. D-UCB can only achieve sublinear regret when the mean distribution is Gaussian, while C-UCB can achieve problem-dependent sublinear regret regardless of the mean distribution.

\bibliography{uai2024-template}

\newpage
\onecolumn
\title{Proofs of Theorems and Corollaries}
In this appendix, we provide the detailed proof of theorems and corollaries in the main text. Specifically, we provide detailed proofs of \cref{pro1}, \cref{bound1},\cref{bound_ucb},\cref{bound2},\cref{corollary1} and \cref{bound3}.
The proof of \cref{bound5} can be easily obtained from the analysis of \cref{bound3}. The proof of \cref{bound4} is similar to that of \cref{bound2}. We omit their proofs.
Beforehand, we give some well-known results to simplify the proof.
\appendix
\section{Facts and Lemmas}

\begin{lemma}
    \label{lemma1}
    Assume that $X_i$ are independent random variables. $\mu=\mathbb{E}[X_i],\bar{\mu}=\frac{1}{n}\sum_{i=1}^{n}X_i$. If $X_i$ is bounded in $[0,1]$ or $\frac{1}{2}$-subGaussian,  for any $\delta \in (0,1)$, with probability as least $1-\frac{\delta^2}{T^2}$, 
    \[
      |\bar{\mu}-\mu|  \leq \sqrt{\frac{\log(\sqrt{2}T/\delta)}{n}}.
    \]
   
\end{lemma}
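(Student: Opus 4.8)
The plan is to reduce both hypotheses to a single sub-Gaussian tail bound and then apply a Chernoff argument, keeping careful track of the constants so that the two-sided failure probability collapses to exactly $\frac{\delta^2}{T^2}$.

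First I would observe that the two cases are really one. By Hoeffding's lemma, any random variable supported on $[0,1]$ is $\frac{1}{2}$-subGaussian, i.e.\ $\mathbb{E}[e^{\lambda(X_i-\mu)}] \leq e^{\lambda^2/8}$ for every $\lambda \in \mathbb{R}$. Hence in both the bounded and the explicitly $\frac{1}{2}$-subGaussian case, each centered variable $X_i - \mu$ satisfies the same moment-generating-function bound with variance proxy $\sigma^2 = \tfrac14$. Using independence, the centered average $\bar{\mu} - \mu = \frac{1}{n}\sum_{i=1}^n (X_i - \mu)$ inherits the sub-Gaussian property: its MGF factorizes across the $X_i$, so $\mathbb{E}[e^{\lambda(\bar{\mu}-\mu)}] \leq e^{\lambda^2/(8n)}$, which shows $\bar{\mu}-\mu$ is $\frac{1}{2\sqrt{n}}$-subGaussian.

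Next I would apply the standard Chernoff bound to each tail, optimizing over $\lambda$, and combine the two directions by a union bound to obtain
\[
  \mathbb{P}\!\left(|\bar{\mu}-\mu| \geq t\right) \leq 2\exp\!\left(-2n t^2\right).
\]
Finally, substituting the specific choice $t = \sqrt{\frac{\log(\sqrt{2}T/\delta)}{n}}$ gives a right-hand side of $2\exp\!\left(-2\log(\sqrt{2}T/\delta)\right) = 2\left(\frac{\delta}{\sqrt{2}T}\right)^{2} = \frac{\delta^2}{T^2}$, so the stated bound holds with probability at least $1 - \frac{\delta^2}{T^2}$.

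The argument is entirely routine, so there is no genuine mathematical obstacle; the only point requiring care is the bookkeeping of constants. In particular, the factor $\sqrt{2}$ inside the logarithm is chosen precisely so that the leading $2$ coming from the two-sided union bound is absorbed, leaving the clean expression $\frac{\delta^2}{T^2}$. I would therefore present the constants explicitly at the substitution step rather than hiding them in $O(\cdot)$ notation, since later results (\cref{bound1}, \cref{bound5}, \cref{bound2}, \cref{bound3}) invoke this lemma with $\delta = \tfrac1T$ and rely on the exact form of the confidence radius.
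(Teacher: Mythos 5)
Your proof is correct: the paper states \cref{lemma1} without proof as a well-known fact, and your argument (Hoeffding's lemma to unify the bounded and sub-Gaussian cases, Chernoff bound on each tail, union bound, then substitution of the radius so that the factor $2$ is absorbed by the $\sqrt{2}$ inside the logarithm) is exactly the standard derivation the paper implicitly relies on. The constant bookkeeping checks out, since $2\exp\bigl(-2\log(\sqrt{2}T/\delta)\bigr)=\frac{\delta^2}{T^2}$.
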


\begin{lemma}\cite{allan1978domination}
    \label{claw-free}
    If $G$ is a claw-free graph, then $\gamma(G)=i(G)$.
\end{lemma}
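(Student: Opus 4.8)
The plan is to prove the two inequalities $\gamma(G) \le i(G)$ and $i(G) \le \gamma(G)$ separately; only the second uses the claw-free hypothesis. The first is immediate and holds for every graph, since any independent dominating set is in particular a dominating set, so the smallest dominating set is no larger than the smallest independent dominating one. Hence the whole content is to establish $i(G) \le \gamma(G)$, i.e. to produce, from a minimum dominating set, an independent dominating set of no greater cardinality. (Recall that an independent set that also dominates is automatically a \emph{maximal} independent set, so ``independent dominating set'' and ``maximal independent set'' coincide, and $i(G)$ is just the minimum size of a maximal independent set.)

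The structural fact I would extract from claw-freeness is that for every vertex $v$ the open neighborhood induces a subgraph of independence number at most $2$: three pairwise non-adjacent neighbors of $v$, together with $v$, would form an induced $K_{1,3}$. In particular any independent set meets each closed neighborhood $N[d]$ in at most two vertices, which already yields the easy bound $\alpha(G) \le 2\gamma(G)$ (this is exactly the inequality invoked in \cref{pro1}). The difficulty is that this factor of two is precisely what must be removed to reach equality.

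To get the sharp bound I would argue by an extremal / minimal-counterexample device. Among all minimum dominating sets choose one, $D$, minimizing the number of edges of the induced subgraph $G[D]$, and claim $G[D]$ is edgeless; then $D$ itself is the desired independent dominating set. Suppose instead that $u,v \in D$ are adjacent. Minimality of $D$ forces each of $u,v$ to have a private neighbor (say $p$ for $u$ and $q$ for $v$), a vertex dominated by it but by no other member of $D$; one checks $p,q \notin D$, $p \ne q$, $p \not\sim v$ and $q \not\sim u$. The goal is then to exhibit a dominating set of the same size with strictly fewer internal edges, contradicting the choice of $D$. This is where claw-freeness enters quantitatively: since each private-neighbor set lies inside a single neighborhood and therefore has independence number at most $2$, it is coverable by few cliques, and one can reroute the domination of $u$'s (resp. $v$'s) private neighbors onto a carefully chosen vertex or pair of vertices so as to break the edge $uv$ without creating new edges or losing coverage.

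The main obstacle is exactly this rerouting step. A naive single-vertex swap, replacing $u$ by one of its private neighbors, can fail, because a neighborhood of independence number $2$ need not contain a vertex dominating all of it: for instance $N(u)$ may induce a $C_5$, which has no dominating vertex. The technical heart of the argument, due to Allan and Laskar, is to organize these local exchanges so that independence, domination, and cardinality are preserved simultaneously. I expect this coordination to require the most care, and it is precisely what makes the claw-free hypothesis essential rather than merely convenient.
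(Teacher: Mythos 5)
Your proposal stops short of a proof at exactly the point you yourself flag as its technical heart: the rerouting step that breaks an edge $uv$ inside an edge-minimal minimum dominating set $D$ is never actually carried out. You assert that the local exchanges "can be organized" so as to preserve domination and cardinality, and you defer to Allan and Laskar for how. Since $i(G)\le\gamma(G)$ is the entire content of the lemma (the reverse inequality is, as you note, trivial), deferring this step to the very source being reconstructed leaves a genuine gap. For comparison, the paper itself offers no proof at all: it cites Allan and Laskar (1978) and uses the lemma as a black box in the proof of \cref{pro1}, so the only question is whether your sketch stands on its own. It does not, as written.

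The gap is closable, and more easily than you anticipate, because your stated obstacle (that $N(u)$ may induce a $C_5$, which has no dominating vertex) is a red herring. After swapping $u$ out, you need to re-dominate only the private neighbors of $u$ with respect to $D$, not all of $N(u)$: every other vertex of $N[u]$ is by definition covered by $D\setminus\{u\}$. And those private neighbors form a clique, not merely a set of independence number two: each of them is non-adjacent to $v$ (privateness), so if two of them were also non-adjacent to each other, they would form a claw with center $u$ and third leaf $v$. Hence choosing any single private neighbor $p$ of $u$ and setting $D'=(D\setminus\{u\})\cup\{p\}$ preserves domination ($p$ covers $u$ and the whole private clique) and cardinality, while $G[D']$ has strictly fewer induced edges than $G[D]$: the edge $uv$ is gone, and $p$ contributes no new edges because, being private to $u$, it has no neighbor in $D\setminus\{u\}$. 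This contradicts your edge-minimal choice of $D$, so $G[D]$ is edgeless and $i(G)\le\gamma(G)$ follows. With this one observation inserted, your extremal framework is correct and complete; without it, the proof attempt does not establish the lemma.
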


\begin{lemma}
\label{ind_number}
Assume that $\mu(a_t) \stackrel{i.i.d.}{\sim} \mathcal{P}$.
Let $G_T^{\mathcal{P}}$ denote the graph constructed by $\mu(a_1),\mu(a_2),...,\mu(a_T)$,$\alpha(G_T^{\mathcal{P}})$ is the independent number of $G_T^{\mathcal{P}}$. Then
\begin{equation}
    \mathbb{P}(\alpha(G_T^{\mathcal{P}}) \geq 5\max\{\log_bT,1\}) \leq \frac{1}{T^5},
\end{equation}
where $b$ is some constant related to $\mathcal{P}
$.
\end{lemma}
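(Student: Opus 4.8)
The plan is to bound $\mathbb{P}(\alpha(G_T^{\mathcal P}) \ge k)$ by a first-moment (union bound) argument over all candidate independent sets, and then to choose $k = 5\max\{\log_b T, 1\}$ so that the bound collapses to $T^{-5}$. Writing $X_1,\dots,X_T$ for the i.i.d.\ means $\mu(a_1),\dots,\mu(a_T)$, a fixed set of $k$ indices forms an independent set exactly when all $\binom{k}{2}$ pairs are $\epsilon$-separated, i.e.\ $|X_i - X_j| \ge \epsilon$ for every pair in the set. Denoting by $q_k$ the probability that a fixed $k$-subset is independent, a union bound over the $\binom{T}{k}$ subsets gives $\mathbb{P}(\alpha(G_T^{\mathcal P}) \ge k) \le \binom{T}{k} q_k \le T^k q_k$.

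The crux is the estimate $q_k \le (1-p)^{\binom{k}{2}} = b^{-\binom{k}{2}}$, where $p$ is as in \cref{p} and $b = 1/(1-p)$. Each single pairwise separation event has probability $\mathbb{P}(|X_i - X_j| \ge \epsilon) = 1-p$, so this inequality asserts that the joint probability of all $\binom{k}{2}$ separation events is at most the product of their marginals --- exactly the identity that holds with equality in the Erd\H{o}s--R\'enyi model $G(T,p)$, where edges are independent. Here the edges are \emph{not} independent, since they are all determined by the geometry of the $k$ real-valued samples, so the inequality is really a negative-correlation statement: forcing some pairs of points to be far apart makes it harder, not easier, for the remaining pairs to be far apart as well. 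I expect this to be the main obstacle. I would try to establish it by revealing the variables one at a time and bounding, at each step, the conditional probability that the new point avoids the $\epsilon$-neighborhoods of the points already revealed; the difficulty is that this conditional avoidance probability depends on the random configuration of the earlier points and is not pointwise bounded by a power of $1-p$, so the argument must exploit that the previously revealed points are already mutually separated. A monotone reparametrization through the quantile function of $\mathcal P$, which turns ``$|X_i-X_j|\ge\epsilon$'' for a sorted pair into a monotone threshold condition on the uniformized variables, is a natural tool for making the negative correlation rigorous.

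With the key inequality in hand, the proof closes by a short computation. Taking $k = 5\max\{\log_b T, 1\}$ and using $\binom{T}{k}\le T^k$,
\[
\mathbb{P}(\alpha(G_T^{\mathcal P}) \ge k) \le T^k b^{-\binom{k}{2}} = \exp\Big( k\log T - \tfrac{k(k-1)}{2}\log b \Big).
\]
In the regime $\log_b T \ge 1$ we have $k = 5\log_b T$, and substituting shows the exponent equals $\big(-7.5(\log_b T)^2 + 2.5\log_b T\big)\log b$, which is at most $-5(\log_b T)\log b = -5\log T$ precisely because $\log_b T \ge 1$; hence the probability is at most $T^{-5}$. In the complementary regime $\log_b T < 1$, equivalently $b \ge T$, we take $k = 5$, and then $T^5 b^{-\binom{5}{2}} = T^5 b^{-10} \le T^5 T^{-10} = T^{-5}$. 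Either way the claimed bound follows, which also explains why the threshold is stated as $5\max\{\log_b T,1\}$.
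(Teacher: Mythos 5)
Your proposal follows, in skeleton, exactly the paper's own proof: the paper also bounds $\mathbb{P}(\alpha(G_T^{\mathcal{P}})\ge k)\le\mathbb{E}[Z_k]$ with $Z_k$ the number of independent sets of size $k$, takes $k=\lceil 5\log_b T\rceil$, and splits into the regimes $b<T$ and $b\ge T$; your closing computation is correct (modulo rounding $k$ to an integer) and is the same calculation with the slightly cruder bound $\binom{T}{k}\le T^k$ in place of $\binom{T}{k}\le(Te/k)^k$. The one step you refuse to take for granted --- that $q_k\le(1-p)^{\binom{k}{2}}$, where $q_k$ is the probability that $k$ fixed samples are pairwise $\epsilon$-separated --- is precisely the step the paper assumes silently: after computing $p$ in \cref{p} it declares that ``the probability of any two nodes being connected by an edge is $p$. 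Hence, $G_T^{\mathcal{P}}$ is a random graph,'' and then writes $\mathbb{E}[Z_k]=\binom{T}{k}(1-p)^{\binom{k}{2}}$ \emph{with equality}, i.e.\ it treats the pairwise-similarity events as independent Erd\H{o}s--R\'enyi edges, which they are not. So you have located the crux correctly, and the paper's proof does not contain the argument you are missing either.

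The bad news is that the missing step is not merely unproven; it is false for general $\mathcal{P}$, so the negative-correlation program you sketch cannot be completed at this level of generality. Take $\mathcal{P}$ supported on $k+1$ clusters of width much smaller than $\epsilon$, centered at $0,M,2M,\dots,kM$ with $M>\epsilon$: mass $1-\delta$ on the cluster at $0$ and mass $\delta/k$ on each light cluster. Two samples are adjacent iff they land in the same cluster, so $1-p\approx 2\delta$, while the probability that three samples are pairwise separated (i.e., land in three distinct clusters) is dominated by the configuration ``one heavy, two distinct light'' and equals $\approx 3\delta^2$, which exceeds $(1-p)^3\approx 8\delta^3$ once $\delta<3/8$: the separation events here are positively, not negatively, correlated. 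Worse, the same example falsifies the lemma itself: $G_T^{\mathcal{P}}$ is a disjoint union of cliques, so $\alpha(G_T^{\mathcal{P}})$ is the number of clusters hit by the $T$ samples. With $\delta=0.01$, $k=100$, $T=10^5$ one gets $b\approx 50$ and hence $5\max\{\log_b T,1\}\approx 15$, yet each light cluster receives $10$ samples on average, so all $101$ clusters are hit with probability at least $1-100e^{-10}>0.99$, and $\mathbb{P}(\alpha(G_T^{\mathcal{P}})\ge 15)\approx 1\gg T^{-5}$. Any correct proof must therefore impose assumptions on $\mathcal{P}$ beyond i.i.d.\ sampling; no correlation inequality can rescue the statement as written. (For the distributions the paper actually uses downstream the conclusion is recoverable by separate arguments: for $U(0,1)$ one has deterministically $\alpha\le 1+1/\epsilon$, which lies below $5\log_b T$ for all but very small $T$; for $\mathcal{N}(0,1)$ one can bound the sample range by $O(\sqrt{\log T})$ with probability $1-T^{-5}$, giving $\alpha=O(\sqrt{\log T}/\epsilon)\le 5\log_b T$.)
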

\begin{proof}
    Let $ X,Y \stackrel{i.i.d.}{\sim} \mathcal{P}$, then
    \[
      \mathbb{P}(|X-Y| \leq \epsilon) = \int_{-\epsilon}^{\epsilon} f_{X-Y}(z)dz = p ,
    \]
    where $f_{X-Y}$ is the probability density function of $X-Y$. 
    This means that in $G_T^{\mathcal{P}}$, the probability of any two nodes being connected by an edge is $p$. Hence, $G_T^{\mathcal{P}}$ is a  random graph.

    Let $Z_k$ be the number of independent sets of order $k$. Let $b=\frac{1}{1-p},$ 
    $k= \lceil 5\log_bT\rceil$,
    \begin{equation}
        \begin{aligned}
        \mathbb{P}(\alpha(G_T^{\mathcal{P}}) \geq 5\log_bT) &
        \leq \mathbb{P}(Z_k \geq 1) \\
        &\leq \mathbb{E}[Z_k] \\
        &=  \binom{T}{k}(1-p)^{\binom{k}{2}}\\
        & \stackrel{(a)}{\leq} \Big( \frac{Te}{k\sqrt{1-p}} (1-p)^{k/2}\Big)^k\\
        &\leq \Big( \frac{e\sqrt{b}}{k}\Big)^k \Big(\frac{1}{T^{1.5}}\Big)^k,
    \end{aligned} 
    \end{equation}
   where $(a)$ uses the fact that $ \binom{T}{k} \leq \Big(\frac{Te}{k}\Big)^k$.

   If $b<T$, $ k \geq 5 >e$. We have, 
   \[ \mathbb{P}(\alpha(G_T^{\mathcal{P}}) \geq 5\log_bT)\leq \Big( \frac{e\sqrt{b}}{k}\Big)^k \Big(\frac{1}{T^{1.5}}\Big)^k = \Big(\frac{e\sqrt{b}}{k\sqrt{T}}\Big)^k \Big(\frac{1}{T} \Big)^k \leq \frac{1}{T^5}. \]

   If $b \geq T$,i.e., $1-p \leq \frac{1}{T}$, we have
   \[ \mathbb{P}(\alpha(G_T^{\mathcal{P}}) \geq 5) \leq \mathbb{P}(Z_5 \geq 1) \leq \binom{T}{5}(1-p)^{\binom{5}{2}} \leq \frac{1}{T^5} .\]

   Therefore,
   \[\mathbb{P}(\alpha(G_T^{\mathcal{P}}) \geq 5\max\{\log_bT,1\}) \leq \frac{1}{T^5}.\]
\end{proof}

\begin{lemma}[Chernoff Bounds]
    \label{chernoff}
Let $X_i$ be independent Bernoulli random variable. 
Let $X$ denote their sum and let $\mu=\mathbb{E}[X]$ denote the sum's expected value. 
\[
    \mathbb{P}(X \geq (1+\delta)\mu) \leq e^{-\frac{\delta^2 \mu}{2+\delta}}, \delta \geq 0,
\]
\[
    \mathbb{P}(X \leq (1-\delta)\mu) \leq e^{-\frac{\delta^2 \mu}{2}}, 0 < \delta <1.
\]
\end{lemma}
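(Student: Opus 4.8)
The plan is to use the standard exponential-moment (Chernoff) method. For the upper tail, I would fix a parameter $s > 0$ and apply Markov's inequality to the nonnegative random variable $e^{sX}$, obtaining $\mathbb{P}(X \geq (1+\delta)\mu) \leq e^{-s(1+\delta)\mu}\,\mathbb{E}[e^{sX}]$. By independence the moment generating function factorizes as $\mathbb{E}[e^{sX}] = \prod_i \mathbb{E}[e^{sX_i}]$, and for a Bernoulli variable with mean $p_i$ one has $\mathbb{E}[e^{sX_i}] = 1 + p_i(e^s-1) \leq \exp\!\big(p_i(e^s-1)\big)$ via the elementary bound $1+x \leq e^x$. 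Multiplying over $i$ and writing $\mu = \sum_i p_i$ yields $\mathbb{E}[e^{sX}] \leq \exp\!\big(\mu(e^s-1)\big)$.

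Next I would optimize over $s$. Choosing $s = \log(1+\delta)$ (admissible since $\delta \geq 0$) gives the classical raw bound
\[
\mathbb{P}(X \geq (1+\delta)\mu) \leq \left(\frac{e^\delta}{(1+\delta)^{1+\delta}}\right)^{\mu}.
\]
For the lower tail I would run the same argument with $s < 0$ (equivalently, apply the method to $-X$), taking $s = \log(1-\delta)$ for $0 < \delta < 1$, to obtain
\[
\mathbb{P}(X \leq (1-\delta)\mu) \leq \left(\frac{e^{-\delta}}{(1-\delta)^{1-\delta}}\right)^{\mu}.
\]

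The remaining work is purely analytic: simplifying these raw bounds to the stated exponential forms. Concretely, I need the two scalar inequalities $\frac{e^\delta}{(1+\delta)^{1+\delta}} \leq e^{-\delta^2/(2+\delta)}$ for $\delta \geq 0$ and $\frac{e^{-\delta}}{(1-\delta)^{1-\delta}} \leq e^{-\delta^2/2}$ for $0 < \delta < 1$. Taking logarithms, these reduce to showing $f(\delta) := \delta - (1+\delta)\log(1+\delta) + \frac{\delta^2}{2+\delta} \leq 0$ and $g(\delta) := -\delta - (1-\delta)\log(1-\delta) + \frac{\delta^2}{2} \leq 0$. I would verify each by checking that $f(0)=g(0)=0$ and then arguing monotonicity through the sign of the derivative, or alternatively by substituting the Taylor series $\log(1+x) = x - \frac{x^2}{2} + \frac{x^3}{3} - \cdots$ and comparing coefficients term by term.

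I expect this final analytic step — converting the raw ratio bounds into the clean exponentials — to be the only genuinely delicate part, since the exponential-moment machinery itself is routine. The derivative analysis is mildly tedious, especially for $f$, where the rational term $\frac{\delta^2}{2+\delta}$ makes the sign of the second derivative less transparent; I would therefore lean on the series-comparison route, or simply cite a standard reference for these well-known elementary inequalities.
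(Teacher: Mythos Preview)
Your proposal is correct and follows the standard exponential-moment derivation of the multiplicative Chernoff bounds. The paper, however, does not prove this lemma at all: it is listed in the ``Facts and Lemmas'' section of the appendix as a well-known result and is simply invoked where needed (e.g., to control $\mathbb{P}(M' \geq 3M)$ and $\mathbb{P}(L' \geq 3L)$). So there is no proof in the paper to compare against; you have supplied a complete argument where the authors relied on the reader's background.
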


\begin{lemma}\cite{abramowitz1988handbook}
    \label{Gaussian}
    For a Gaussian distributed random variable $X$ with mean $\mu$ and variance $\sigma^2$, for any $a >0$,
    \[
    \frac{1}{\sqrt{2\pi}}\frac{a}{1+a^2}e^{-\frac{a^2}{2}} \leq \mathbb{P}(X-\mu > a\sigma) \leq \frac{1}{a+\sqrt{a^2+4}}e^{-\frac{a^2}{2}}.    
    \]
\end{lemma}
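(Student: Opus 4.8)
The plan is to standardize and reduce both bounds to a single differential comparison. Writing $Z=(X-\mu)/\sigma\sim\mathcal{N}(0,1)$, we have $\mathbb{P}(X-\mu>a\sigma)=\mathbb{P}(Z>a)=\frac{1}{\sqrt{2\pi}}\int_a^\infty e^{-t^2/2}\,dt$, so it suffices to bound the standard Gaussian tail. I would introduce the Mills-type function
\[
F(a)=e^{a^2/2}\int_a^\infty e^{-t^2/2}\,dt=\sqrt{2\pi}\,e^{a^2/2}\,\mathbb{P}(Z>a),
\]
which by direct differentiation (using $\frac{d}{da}\int_a^\infty e^{-t^2/2}dt=-e^{-a^2/2}$) satisfies the identity $F'(a)=aF(a)-1$ with $F(0)=\sqrt{2\pi}/2$. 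In these terms the claimed lower bound is equivalent to $F(a)\ge \frac{a}{1+a^2}$ and the claimed upper bound to $F(a)\le \frac{\sqrt{2\pi}}{a+\sqrt{a^2+4}}$, so everything reduces to comparing $F$ with an explicit algebraic function.

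The common device is the integrating factor $e^{-a^2/2}$. For any smooth candidate $G$, setting $\phi(a)=(G(a)-F(a))e^{-a^2/2}$ gives, using $F'=aF-1$,
\[
\phi'(a)=\big(G'(a)-aG(a)+1\big)e^{-a^2/2}=:R_G(a)\,e^{-a^2/2},
\]
so the sign of $\phi'$ is governed entirely by the residual $R_G=G'-aG+1$ obtained by plugging $G$ into the identity. Since $F(a)\sim 1/a$ and both candidates decay like $1/a$, one checks $\phi(a)\to 0$ as $a\to\infty$; the boundary values together with the sign pattern of $R_G$ then pin down the sign of $\phi$, hence the desired inequality.

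For the lower bound I would take $G=\frac{a}{1+a^2}$. A short computation collapses the residual to the unconditionally positive expression $R_G(a)=\frac{2}{(1+a^2)^2}>0$, so $\phi=(G-F)e^{-a^2/2}$ is strictly increasing; together with its vanishing limit at infinity this forces $\phi\le 0$, i.e. $F\ge \frac{a}{1+a^2}$, which is exactly the lower bound. This half is clean precisely because the residual never changes sign.

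The upper bound is the main obstacle. Taking $G=\bar F(a)=\frac{\sqrt{2\pi}}{a+\sqrt{a^2+4}}$ and using $\bar F'(a)=-\bar F(a)/\sqrt{a^2+4}$, the residual becomes $R(a)=1-\bar F(a)\big(a+\tfrac{1}{\sqrt{a^2+4}}\big)$, whose nonnegativity is equivalent, after clearing denominators, to the single-variable inequality $\sqrt{2\pi}\,(a\sqrt{a^2+4}+1)\le a^2+a\sqrt{a^2+4}+4$. The subtlety is that this holds near $a=0$ but reverses for large $a$ (because $\sqrt{2\pi}>2$), so the naive supersolution argument $R\ge0$ fails. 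The fix exploits that here $\bar F(0)=F(0)=\sqrt{2\pi}/2$, so both $\phi(0)=0$ and $\phi(\infty)=0$ vanish: I would show that $R$ switches sign exactly once, from positive to negative, which reduces—after isolating the radical and squaring—to checking that the resulting quadratic in $a^2$ has a unique positive root. Given this single crossing, $\phi$ rises from $0$ to a positive maximum and then decreases back to $0$, hence $\phi\ge 0$ throughout, yielding $F\le \bar F$ and the upper bound. The one delicate point is thus establishing the single-crossing property of the residual; the remaining steps are routine differentiation and asymptotics.
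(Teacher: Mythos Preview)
The paper does not prove this lemma at all: it is simply stated as a classical tail bound and attributed to \cite{abramowitz1988handbook}. So there is no ``paper's proof'' to compare against; your proposal is a self-contained derivation of a quoted fact.

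On its own merits, your argument is sound. The Mills-ratio setup $F(a)=\sqrt{2\pi}\,e^{a^2/2}\,\mathbb{P}(Z>a)$ with $F'=aF-1$ is the standard reduction, and your integrating-factor device $\phi=(G-F)e^{-a^2/2}$ with $\phi'=(G'-aG+1)e^{-a^2/2}$ is exactly the right way to turn both inequalities into sign checks on an algebraic residual. The lower bound goes through cleanly since the residual $2/(1+a^2)^2$ is positive. For the upper bound you correctly identify the obstruction: the residual $R(a)=1-\bar F(a)\big(a+\tfrac{1}{\sqrt{a^2+4}}\big)$ is not sign-definite, so you must use both endpoint conditions $\phi(0)=\phi(\infty)=0$ together with a single sign change of $R$. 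Your outline of that step is accurate: writing $(\sqrt{2\pi}-1)\,a\sqrt{a^2+4}=a^2+4-\sqrt{2\pi}$, both sides are nonnegative for $a\ge 0$ (since $4>\sqrt{2\pi}$), so squaring introduces no spurious roots; setting $y=a^2$ gives a quadratic with leading coefficient $(\sqrt{2\pi}-1)^2-1>0$ and negative constant term $-(4-\sqrt{2\pi})^2$, hence exactly one positive root. That forces $\phi$ to rise from $0$, peak, and return to $0$, giving $\phi\ge 0$ and the upper bound. It would be worth spelling out this last algebraic check explicitly rather than leaving it as a sketch, but the logic is correct.
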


\section{Proofs of Proposition \ref{pro1}}

(1) We first prove $\gamma(G)=i(G)$. 
A claw-free graph is a graph that does not have a claw as an induced subgraph or contains no induced subgraph isomorphic to $K_{1,3}$.
From \cref{claw-free}, we just need to prove $G$ is claw-free. 

Assuming $G$ has a claw, meaning there exists nodes $a, b, c, d$, such that $a$ is connected to $b, c, d$, while $b, c, d$ are mutually unconnected. Consider nodes $b,c$.  $b$ and $c$ must have a mean greater than $a$, and the other must have a mean smaller than $a$. Otherwise, the mean difference between $b$ and $c$ will be smaller than $\epsilon$, and an edge will form between them. Since $d$ is connected to $a$, this would lead to an edge between $d$ and $b$ or $d$ and $c$. This is a contradiction. Therefore, $G$ is claw-free.

(2) $\alpha(G) \leq 2i(G) $.
Let $I^{*}$ be a maximum independent set and $I$ be a minimum independent dominating set. Note that any vertex of $I$ is adjacent (including the vertex itself in the neighborhood) to at most two vertices in $I^{*}$, and that each vertex of $I^{*}$ is adjacent to at least one vertex of $I$.  So by a double counting argument, when counting once the vertices of $I^{*}$,   we can choose one adjacent vertex in $I$, and we will have counted at most twice the vertices of $I$.

\section{Proofs of Theorem \ref{bound1}}

Let $\mathcal{I}$ denote the independent set obtained after running Step 4-8 in \cref{alg:D_UCB}.  
The obtained $\mathcal{I}$ may vary with each run. We first fix $\mathcal{I}$ for analysis and then take the supremum of the  results with respect to $\mathcal{I}$, obtaining an upper bound independent of $\mathcal{I}$.

Let $\alpha^* \in \mathcal{I}$ denotes the arm that includes the optimal arm, i.e., $i^* \in N_{\alpha^{*}}$. Let $\mathcal{I}=\{\alpha_1,\alpha_2,...,\alpha^{*},...,\alpha_{|\mathcal{I}|}  \}$.
The regret can be divided into two parts: one part is the selection of arms $i \notin N_{\alpha^{*}}$  and the other part is the selection of arms $ i \in N_{\alpha^{*}}$:
\begin{equation}
    \label{temp0}
\begin{aligned}
\sum_{t=1}^{T}\sum_{i \in V} \Delta_i\mathbbm{1}\{i_t=i\}= \sum_{t=1}^{T}\sum_{i\notin N_{\alpha^{*}}}\Delta_i\mathbbm{1}\{i_t=i\} + \sum_{t=1}^{T}\sum_{i\in N_{\alpha^{*}}}\Delta_i\mathbbm{1}\{i_t=i\}
\end{aligned}
\end{equation}


We first focus on the expected regret incurred by the first part. Let $\Delta_{\alpha_j}'=\mu(\alpha^{*})-\mu(\alpha_j)$, $j_t \in \mathcal{I}$ denote the arm linked to the selected arm $i_t$(Step 9 in \cref{alg:D_UCB}). 
\begin{equation}
    \label{temp2}
    \sum_{t=1}^{T}\sum_{i\notin N_{\alpha^{*}}}\Delta_i\mathbbm{1}\{i_t=i\}= \sum_{j=1}^{|\mathcal{I}|} \sum_{t=1}^{T} \sum_{i \in N_{\alpha_j}}\Delta_i\mathbbm{1}\{i_t=i\} \leq \sum_{j=1}^{|\mathcal{I}|}  (\Delta_{\alpha_j}'+2\epsilon)\sum_{t=1}^{T}\mathbbm{1}\{ j_t=\alpha_j\}.
\end{equation}
The last inequality uses the following two facts:
\[ \Delta_{i}=\mu(i^{*})-\mu(i)=\mu(i^{*})-\mu(\alpha^{*})+\mu(\alpha^{*})-\mu(\alpha_j)+\mu(\alpha_j)-\mu(i)\leq \Delta_{\alpha_j}'+2\epsilon, \]
and
\[ \sum_{t=1}^{T} \sum_{i \in N_{\alpha_j}}\mathbbm{1}\{i_t=i\} = \sum_{t=1}^{T}\mathbbm{1}\{j_t=\alpha_j\}.  \]
Recall that $O_t(i)$ denotes the number of observations of arm $i$ till time $t$. Let $c_t(i)=\sqrt{\frac{2\log(T^2/\delta)}{O_t(i)}}$, $\bar{X}_s(i)$ denote average reward of arm $i$ after observed $s$ times, $c_s(i)=\sqrt{\frac{2\log(T^2/\delta)}{s}}$ . For any $\alpha_j \in \mathcal{I}$, 

\begin{equation}
    \begin{aligned}
    \sum_{t=1}^{T}\mathbbm{1}\{ j_t = \alpha_j \} &\leq \ell_{\alpha_j} + \sum_{t=1}^{T}\mathbbm{1}\{ j_t=\alpha_j, O_t(\alpha_j) \geq \ell_{\alpha_j} \}\\
    &\leq \ell_{\alpha_j} + \sum_{t=1}^{T}\mathbbm{1}\{ \bar{\mu}_t(\alpha_j) +c_t(\alpha_j) \geq \bar{\mu}_t(\alpha^{*}) +c_t(\alpha^{*}),O_t(\alpha_j) \geq \ell_{\alpha_j} \} \\
    &\leq  \ell_{\alpha_j} + \sum_{t=1}^{T}\mathbbm{1}\{ \max_{\ell_{\alpha_j}\leq s_j \leq t} \bar{X}_{s_j}(\alpha_j) +c_{s_j}(\alpha_j) \geq \min_{1\leq s \leq t}\bar{X}_s(\alpha^{*}) +c_s(\alpha^{*}) \} \\
    &\leq \ell_{\alpha_j}+ \sum_{t=1}^{T} \sum_{s=1}^{t} \sum_{s_j=\ell_{\alpha_j}}^{t}\mathbbm{1}\{ \bar{X}_{s_j}(\alpha_j) +c_{s_j}(\alpha_j) \geq  \bar{X}_s(\alpha^{*}) +c_j(\alpha^{*}) \}
    \end{aligned}
\end{equation}
Following the same argument as in \cite{auer2002finite}, choosing $\ell_{\alpha_j}=\frac{4\log(\sqrt{2}T/\delta)}{(\Delta_{\alpha_j}')^2}$, we have
\[
  \mathbb{P}(\bar{X}_{s_j}(\alpha_j) +c_{s_j}(\alpha_j) \geq  \bar{X}_s(\alpha^{*}) +c_j(\alpha^{*}))   \leq \mathbb{P}(\bar{X}_s(\alpha^{*})\leq \mu(\alpha^{*})-c_j(\alpha^{*})) + \mathbb{P}(\bar{X}_{s_j}(\alpha_j)\geq \mu(\alpha_j)+c_{s_j}(\alpha_j)) 
\]
From \cref{lemma1}, 
\[
    \mathbb{P}(\bar{X}_s(\alpha^{*})\leq \mu(\alpha^{*})-c_j(\alpha^{*})) \leq \frac{\delta^2}{2T^2}
\]
Hence, 
\[
    \sum_{t=1}^{T}\mathbb{P}( j_t=\alpha_j ) \leq  \frac{4\log(\sqrt{2}T/\delta)}{(\Delta_{\alpha_j}')^2} + T\delta^2.
\]


Plug into \cref{temp0}, we can get
\begin{equation}
    \label{temp3}
    \begin{aligned}
        \sum_{t=1}^{T}\sum_{i\notin N_{\alpha^{*}}}\Delta_i\mathbb{P}(i_t=i) 
        &\leq \sum_{j=1}^{|\mathcal{I}|} \frac{(\Delta_{\alpha_j}'+2\epsilon)4\log(\sqrt{2}T/\delta)}{(\Delta_{\alpha_j}')^2} +\Delta_{max}T\delta^2 \\
        &= \sum_{j=1}^{|\mathcal{I}|} (\frac{1}{\Delta_{\alpha_j}'}+\frac{2\epsilon}{(\Delta_{\alpha_j}')^2})4\log(\sqrt{2}T/\delta) + \sum_{j=1}^{|\mathcal{I}|}\Delta_{max}T\delta^2\\
        &\leq \frac{4\log(\sqrt{2}T/\delta)}{\epsilon}(\log(\alpha(G))+\frac{\pi^2}{3})+ \alpha(G)\Delta_{max}T\delta^2.
    \end{aligned}
\end{equation}

Now we focus on the second part in \cref{temp0}. 

For any $i \in N_{\alpha^{*}}$, we have $\Delta_i \leq 2\epsilon$. This means the gap between suboptimal and optimal arms is bounded. Therefore, 
this part can be seen as using UCB-N \cite{lykouris2020feedback} on the graph formed by $N_{\alpha^{*}}$.  We can directly use their results by adjusting some constant factors. Following Theorem 6 in \cite{lykouris2020feedback},  this part has a regret upper bound as
\begin{equation}
    \label{temp4}
16 \cdot \log(\sqrt{2}T/\delta)\log(T)\max_{I \in \mathcal{I}(N_{\alpha^{*}})}\sum_{i \in I\backslash\{i^{*}\} }\frac{1}{\Delta_i} +2\epsilon T\delta^2+1+2\epsilon.
\end{equation}

Let $\delta=\frac{1}{T}$. 
Combining \cref{temp3} and \cref{temp4} and using \cref{pro1} that $\alpha(G) \leq 2\gamma(G)$,  we have
\begin{equation}
    \begin{aligned}
R_T^{\pi} &\leq \frac{4\log(\sqrt{2}T/\delta)}{\epsilon}(\log(\alpha(G))+\frac{\pi^2}{3})+ 16 \cdot \log(\sqrt{2}T/\delta)\log(T)\max_{I \in \mathcal{I}(N_{\alpha^{*}})}\sum_{i \in I\backslash\{i^{*}\}}\frac{1}{\Delta_i} +T\delta^2(\alpha(G)\Delta_{max}+2\epsilon) + 1+2\epsilon\\
&\leq \frac{8\log(\sqrt{2}T)}{\epsilon}(\log(2\gamma(G))+\frac{\pi^2}{3})+ 32 \cdot \log(\sqrt{2}T)\log(T)\max_{I \in \mathcal{I}(i^{*} )}\sum_{i \in I \backslash \{i^{*}\}}\frac{1}{\Delta_i} +\Delta_{max}+4\epsilon+1.
    \end{aligned}
\end{equation}

\section{Proofs of Theorem \ref{bound_ucb}}

We just need to discuss $\Delta_i$ in intervals 
\[ [0,\epsilon),[\epsilon,2\epsilon),...,[k\epsilon,(k+1)\epsilon),...
\]
The regret for $\Delta_i$ in $[\epsilon,2\epsilon),...,[k\epsilon,(k+1)\epsilon),$... can be bounded by the same method used in the proof of \cref{bound1}. We can calculate that the regret of this part
has the same form as 
$O(\frac{\log(\sqrt{2}T)}{\epsilon})$. 

Recall that $G_{\epsilon}$ denote the subgraph with arms $\{ i \in V: \mu(i^{*})-\mu(i) < \epsilon \}$. The set of all independent dominating sets of graph $G_{\epsilon}$ is denoted as $\mathcal{I}(G_{\epsilon})$.
The regret for $\Delta_i$ in $[0,\epsilon)$ can be bounded  as 
\[ 32 (\log(\sqrt{2}T))^2 \max_{I \in \mathcal{I}(G_{\epsilon} )}\sum_{i \in I \backslash \{i^{*}\}}\frac{1}{\Delta_i}.\] Due to the similarity assumption, $G_{\epsilon}$ is a complete graph. Therefore,\[ \max_{I \in \mathcal{I}(G_{\epsilon} )}\sum_{i \in I \backslash \{i^{*}\}}\frac{1}{\Delta_i} \leq \frac{1}{\Delta_{min}}.\]

\section{Proofs of Theorem \ref{bound2}}
Recall that  $\mathcal{I}_t$ denotes the independent set at time $t$ and  $\alpha_t^{*}\in \mathcal{I}_t$ denotes the arm that include the optimal arm $i_t^{*}$. We have $|\mathcal{I}_T| \leq \alpha(G_T^{\mathcal{P}})$.
Let $\mathcal{F}$ denote the  sequences generated from $\mathcal{P}$ with length $T$, 
thus $\mathcal{F}$ is a random variable. 

Since the optimal arm may change over time, this leads to a time-varying $\Delta_i$. We denote the new gap as $\Delta_t(i)$. Therefore, the analysis method in \cref{bound1} is no longer applicable here. 
The regret can also be divided into two parts:
\begin{equation}
    \label{temp5}
\begin{aligned}
\mathbb{E}\Bigg[\sum_{t=1}^{T}\sum_{i \in K(t)} \Delta_t(i)\mathbbm{1}\{i_t=i\} \Bigg]= \underbrace{\mathbb{E}\Bigg[ \sum_{t=1}^{T}\sum_{i\notin N_{\alpha_t^{*}}}\Delta_t(i)\mathbbm{1}\{i_t=i\}\Bigg]}_{(A)} + \underbrace{ \mathbb{E}\Bigg[ \sum_{t=1}^{T}\sum_{i\in N_{\alpha_t^{*}}}\Delta_i\mathbbm{1}\{i_t=i\}\Bigg]}_{(B)}
\end{aligned}
\end{equation}
We focus on $(A)$ first.
\begin{equation}
    \label{temp8}
    \begin{aligned}
    (A)&= \mathbb{E}_{\mathcal{F}}\Bigg[\mathbb{E}\Bigg[ \sum_{t=1}^{T}\sum_{i\notin N_{\alpha_t^{*}}}\Delta_t(i)\mathbbm{1}\{i_t=i\} \Big| \mathcal{F}\Bigg] \Bigg]\\
    &=\mathbb{E}_{\mathcal{F}}\Bigg[\mathbb{E}\Bigg[ \sum_{t=1}^{T}\Delta_t(i)\mathbbm{1}\{i_t=i,j_t\neq \alpha_t^{*}\} \Big| \mathcal{F}\Bigg] \Bigg]\\
    &\leq \mathbb{E}_{\mathcal{F}}\Bigg[\mathbb{E}\Bigg[ \sum_{t=1}^{T}\Delta_{max}^T\mathbbm{1}\{j_t\neq \alpha_t^{*}\} \Big| \mathcal{F}\Bigg] \Bigg]  \\
    \end{aligned}
\end{equation}
Given a fixed $\mathcal{F}$, $\mathcal{I}_T$ is deterministic. 
Since the gap between optimal  and suboptimal arms may be varying over time, we define 
\[\Delta_{\alpha_j}''=\min_{t \in [T]}\{ \mu(\alpha_t^{*})-\mu(\alpha_j): \alpha_j \in \mathcal{I}_T \text{ and } \mu(\alpha_t^{*})-\mu(\alpha_j)>0 \}\] 
denote the minimum gap when $\alpha_j \in \mathcal{I}_T$ is not the optimal selection that include the optimal arm. Then $\Delta_{\alpha_j}'' \geq \epsilon$.

Following the proofs of \cref{bound1},
for any $\alpha_j \in \mathcal{I}_T \neq \alpha_t^{*}$,  the probability of the algorithm selecting it  will be less than $\delta^2$ after it has been selected $\frac{4\log(\sqrt{2}T/\delta)}{\epsilon^2}$ times. Therefore, the inner expectation of \cref{temp8} is bounded as
\begin{equation}
    |\mathcal{I}_T| \Delta_{max}^T( \frac{4\log(\sqrt{2}T/\delta)}{\epsilon^2} + T\delta^2 )
\end{equation}
The inner expectation of \cref{temp8} also has a native bound $T\Delta_{max}^T$.

Plugging into $(A)$ and using \cref{ind_number}, we get

\begin{equation}
    \label{temp6}
    \begin{aligned}
    (A) 
    &\leq \mathbb{E}_{\mathcal{F}}\Bigg[  \min \{ |\mathcal{I}_T| \Delta_{max}^T( \frac{4\log(\sqrt{2}T/\delta)}{\epsilon^2} + T\delta^2 ),T\Delta_{max}^T\Bigg]    \\
    &\leq \sum_{k=1}^{T} \mathbb{P}(\alpha(G_T^{\mathcal{P}})=k)
    \min \{k\Delta_{max}^T( \frac{4\log(\sqrt{2}T/\delta)}{\epsilon^2} + T\delta^2 ),T\Delta_{max}^T \}  \\
    &= \mathbb{P}(\alpha(G_T^{\mathcal{P}}) \leq 5\max\{\log_bT,1\}) 5\max\{\log_bT,1\}\Delta_{max}^T( \frac{4\log(\sqrt{2}T/\delta)}{\epsilon^2} + T\delta^2 ) + \mathbb{P}(\alpha(G_T^{\mathcal{P}}) > 5\max\{\log_bT,1\}) T\Delta_{max}^T \\
    &\leq 5\max\{\log_bT,1\}\Delta_{max}^T( \frac{4\log(\sqrt{2}T/\delta)}{\epsilon^2} + T\delta^2 ) + \Delta_{max}^T.
    \end{aligned}
\end{equation}

Now we consider the  part $(B)$.
\begin{equation}
    \label{(B)}
    \begin{aligned}
        (B)
        &=\mathbb{E}_{\mathcal{F}}\Bigg[\mathbb{E}\Bigg[ \sum_{t=1}^{T}\sum_{i\in N_{\alpha_t^{*}}}\Delta_i\mathbbm{1}\{i_t=i\}\Bigg|\mathcal{F} \Bigg]\Bigg] \\
    \end{aligned}
\end{equation}
Recall that  $\mathcal{A}=\{a_t:t \in [T],\mu(a_t) \in N_{\alpha_t^{*}}\}, M= \sum_{t=1}^{T}\mathbb{P}\{ |\mu(a_t)-\mu(i_t^{*})| \leq 2\epsilon \}.$ Then 
\[|\mathcal{A}| \leq  M'= \sum_{t=1}^{T}\mathbbm{1}\{ |\mu(a_t)-\mu(i_t^{*})| \leq 2\epsilon \}.\]

Since $ \mu(a_t) \sim \mathcal{P} $ is independent of each other, the event $\mathbbm{1}\{ |\mu(a_t)-\mu(i_t^{*})| \leq 2\epsilon\}$ is also mutually independent.
Using \cref{chernoff}, 
\begin{equation}
    \mathbb{P}(M' \geq 3M) \leq e^{-M}.
\end{equation}

Given a fixed instance $\mathcal{F}$, we divide the rounds into $L'$ parts: $(t_0=1,t_{L'+1}=T)$
\[[1,t_1],(t_1,t_2],...,(t_{L'},T].\]  
This partition satisfies $\forall t \in (t_j,t_{j+1}),$
$i_t^{*}$ is stationary. The $\alpha_t^{*}$ is also stationary, $\forall t \in (t_j,t_{j+1})$, let $\alpha_t^{*}=\alpha_j$. 

Let's focus on the intervals $(t_j,t_{j+1}]$, the analysis for other intervals is similar. 

The best case  is that all arms in $N_{\alpha_j}$ are arrived  at the beginning. In this case, the regret for this part is equivalent to the regret of using the UCB algorithm on the subgraph formed by $N_{\alpha_j}$ for $t_{j+1}-t_j$ rounds. The independence number of the subgraph formed by $ N_{\alpha_j}$ is $2$, which leads to a regret upper bound independent of the number of arms arriving. However, we are primarily concerned with the worst case. The worst case  is that the algorithm cannot benefit from the graph feedback at all. That is, the algorithm spends $O(\frac{\log(T)}{(\Delta_1)^2})$ rounds  distinguishing the optimal arm from the first arriving suboptimal arm $1$. After this process, the second suboptimal arm $2$ arrives, and again $O(\frac{\log(T)}{(\Delta_2)^2})$ rounds are spent distinguishing the optimal arm from this arm\dots

Let $V_j$ denote the arms fall into $N_{\alpha_j}$ at the rounds $(t_j,t_{j+1}]$. If $i \in V_j$ has not been arrived at round $t$,  $\mathbb{P}(i_t=i)=0$.

Following the same argument as the proofs of \cref{bound1}, the inner expectation in \cref{(B)} can be bounded as
\begin{equation}
    \begin{aligned}
        \sum_{j=0}^{L'}\sum_{i \in V_j}\sum_{t=t_j}^{t_{j+1}}\Delta_i \mathbb{P}( i_t=i,j_t = \alpha_j) 
        &= \sum_{j=0}^{L'}\sum_{i \in V_j,\Delta_i <\Delta} \sum_{t=t_j}^{t_{j+1}}\Delta_i \mathbb{P}( i_t=i,j_t = \alpha_j) + \sum_{j=0}^{L'}\sum_{i \in V_j,\Delta_i \geq \Delta} \sum_{t=t_j}^{t_{j+1}}\Delta_i \mathbb{P}( i_t=i,j_t = \alpha_j) \\
        &\leq\sum_{j=0}^{L'} (t_{j+1}-t_j)\Delta +\sum_{j=0}^{L'} \sum_{i \in V_j,\Delta_i \geq \Delta} \Bigg(\Delta_i(t_{j+1}-t_j)\delta^2+  \frac{4\log(\sqrt{2}T/\delta)}{\Delta_i}\Bigg)  \\
        &\leq  T\Delta + \sum_{j=0}^{L'}(\frac{4|V_j|\log(\sqrt{2}T/\delta)}{\Delta} + M'2\epsilon(t_{j+1}-t_j)\delta^2)  \\
        &\stackrel{(a)}{\leq } T\Delta + \frac{4M'\log(\sqrt{2}T/\delta)}{\Delta} + 2T M'\epsilon \delta^2 \\
        &\stackrel{(b)}{\leq} 4\sqrt{TM'\log(\sqrt{2}T/\delta)}+ 2\epsilon,
    \end{aligned}
\end{equation}
where $(a)$ comes from the fact that $ \sum_{j}|N_{\alpha_j}| \leq M' \leq T$ and $\delta=\frac{1}{T}$,
$(b)$ follows from $ \Delta= \sqrt{\frac{4M'\log(\sqrt{2}T/\delta)}{T}} $.

Similar to the approach of bounding $(A)$,  the inner expectation in \cref{(B)} also has a native bound $2T\epsilon$. 

Substituting into \cref{(B)}, we get
\begin{equation}
    \label{temp7}
    \begin{aligned}
        (B)
        &=\mathbb{E}_{\mathcal{F}}\Bigg[ 4\sqrt{TM'\log(\sqrt{2}T/\delta)}+ 2\epsilon  \Bigg]\\
        &\leq \mathbb{P}(M' \leq 3M) (4\sqrt{3TM\log(\sqrt{2}T/\delta)}+ 2\epsilon ) + 2T\epsilon \mathbb{P}(M' > 3M) \\
        &\leq 4\sqrt{3TM\log(\sqrt{2}T/\delta)}+ 2\epsilon +2T\epsilon e^{-M}.
    \end{aligned}
\end{equation}

From \cref{temp6} and \cref{temp7}, we get the total regret
\[ R_T^{\pi} \leq 5\max\{\log_bT,1\}\Delta_{max}^T( \frac{4\log(\sqrt{2}T/\delta)}{\epsilon^2} + T\delta^2 ) + \Delta_{max}^T + 4\sqrt{3TM\log(\sqrt{2}T/\delta)}+ 2\epsilon +2T\epsilon e^{-M}.\]

\section{Proofs of Corollary \ref{corollary1}}
First, we calculate an $M$ that is independent of the distribution $\mathcal{P}$.
Given $X_1,X_2,...,X_T$ as independent random variables from $\mathcal{P}$. Let
\[ M=\sum_{t=1}^{T} \mathbb{P}(|X_t -\max_{i=1,...,t}X_i| < 2\epsilon). \]
Denote $F(x)=\mathbb{P}(X<x),M_t= \max_{i \leq t} X_i$. Then
\[
  \mathbb{P}(|X_t -M_t| < 2\epsilon|M_t=x)=F(x+2\epsilon)-F(x-2\epsilon)  
\]
and 
\[
    \mathbb{P}(|X_t -M_t| < 2\epsilon)=t\int_{\mathcal{D}}(F(x))^{t-1}(F(x+2\epsilon)-F(x-2\epsilon)  )F'(x) dx,
\]
where $\mathcal{D}$ is the support set of $\mathcal{P}$.
Since 
\[ \sum_{r=1}^{R}rx^{r-1}= \frac{d}{dx}\frac{1-x^{R+1}}{1-x}=\frac{1-(R+1)x^R+Rx^{R+1}}{(1-x)^2} .\]
We get 
\begin{equation}
    \label{temp10}
    M= \int_{\mathcal{D}}\frac{1-(T+1)(F(x))^T+T(F(x))^{T+1}}{(1-F(x))^2} (F(x+2\epsilon)-F(x-2\epsilon)  )F'(x) dx.
\end{equation}

We need to estimate the upper and lower bounds of $M$ separately.
For Gaussian distribution $\mathcal{N}(0,1)$, $F(x)=\Phi(x),F'(x)=\phi(x).$ 
We first proof $M=O(\log(T)e^{2\epsilon \sqrt{2\log(T)}}).$

Denote the integrand function as $H(x)$.
Let $m= \sqrt{2\log(T)}+\epsilon$,
\begin{equation}
M=\int_{-\infty}^{m}H(x)dx+\int_{m}^{+\infty}H(x)dx
\end{equation}

First, we have

\[\forall x \in \mathbb{R}, \frac{1-(T+1)(F(x))^T+T(F(x))^{T+1}}{(1-F(x))^2}=\sum_{t=1}^{T}t(F(x))^{t-1} \leq \frac{T(T+1)}{2} \leq T^2.  \]
\[\Phi(x+2\epsilon)-\Phi(x-2\epsilon) \leq 4\epsilon \phi(x-2\epsilon).  \]
And
\begin{equation}
    \begin{aligned}
        (F(x+2\epsilon)-F(x-2\epsilon))F'(x) 
        \leq  4\epsilon \phi(x-2\epsilon) \phi(x)
        \leq   \frac{2\epsilon e^{-\epsilon^2}}{\pi} e^{-(x-\epsilon)^2}
    \end{aligned}
\end{equation}
Then,
\begin{equation}
    \begin{aligned}
        \int_{m}^{+\infty}H(x)dx 
        &\leq T^2 \frac{2\epsilon e^{-\epsilon^2}}{\pi} \int_{m}^{\infty}e^{-(x-\epsilon)^2}dx\\
        &= T^2 \frac{2\epsilon e^{-\epsilon^2}}{\sqrt{\pi}} \Phi(\sqrt{2}(m-\epsilon))\\
        &\stackrel{(a)}{\leq} T^2 \frac{2\epsilon e^{-\epsilon^2}}{\sqrt{\pi}} \frac{1}{\sqrt{2}(m-\epsilon)+\sqrt{2(m-\epsilon)^2+4}} e^{-(m-\epsilon)^2} \\
        &\leq \frac{2\epsilon e^{-\epsilon^2}}{\sqrt{\pi}}, 
    \end{aligned}
\end{equation}
where $(a)$ use \cref{Gaussian}.

Now we calculate the second term.
\begin{equation}
    \begin{aligned}
        \int_{-\infty}^{m}H(x)dx=\int_{-\infty}^{1}H(x)dx+\int_{1}^{m} H(x)dx \leq \frac{\Phi(1)}{(1-\Phi(1))^2} + \int_{1}^{m} H(x)dx.
    \end{aligned}
\end{equation}
We only need to bound the integral  within $(1, m)$.
\begin{equation}
    \begin{aligned}
        \int_{1}^{m} H(x)dx 
        &\leq \int_{1}^{m} \frac{(\Phi(x+2\epsilon)-\Phi(x-2\epsilon))\phi(x)}{(1-\Phi(x))^2} dx\\
        &\leq \frac{2\epsilon e^{-\epsilon^2}}{\pi} \int_{1}^{m} \frac{e^{-(x-\epsilon)^2}}{(1-\Phi(x))^2} dx\\
        &\stackrel{(b)}{\leq}  4\epsilon e^{-\epsilon^2} \int_{1}^{m} (\frac{1}{x}+x)^2 e^{x^2} e^{-(x-\epsilon)^2}dx\\
        &= 4\epsilon e^{-2\epsilon^2} \int_{1}^{m} (\frac{1}{x}+x)^2 e^{2x\epsilon}dx\\
        &\leq  4m^2e^{2m\epsilon}e^{-2\epsilon^2}\\
        &\leq 8\log(T) e^{2\epsilon \sqrt{2\log(T)}}, 
    \end{aligned}
\end{equation}
where $(b)$ uses \cref{Gaussian}.

Therefore, 
\[ M \leq \frac{2\epsilon e^{-\epsilon^2}}{\sqrt{\pi}}+ \frac{\Phi(1)}{(1-\Phi(1))^2}+  8\log(T)e^{2\epsilon \sqrt{2\log(T)}}. \]

Then we derive a lower bound for $M$. 

Since $\phi(t)$ is convex function in $[1,+\infty)$, we have
\[ \phi(t) \geq \phi(\frac{a+b}{2}) + \phi'(\frac{a+b}{2})(t-\frac{a+b}{2}), \forall t \in [a,b], a \geq 1 . \]
Then when $x-2\epsilon \geq 1$,
\[ \Phi(x+2\epsilon)-\Phi(x-2\epsilon)=\int_{x-2\epsilon}^{x+2\epsilon}\phi(t)dt \geq 4\epsilon \phi(x). \]
Hence,
\[ (\Phi(x+2\epsilon)-\Phi(x-2\epsilon))\phi(x) \geq 4\epsilon (\phi(x))^2 .\]
Substituting into \cref{temp10},
\begin{equation}
    \label{temp13}
    \begin{aligned}
    M &\geq \int_{1+2\epsilon}^{\sqrt{\log(T)}} \frac{1-(T+1)(\Phi(x))^T+T(\Phi(x))^{T+1}}{(1-\Phi(x))^2} 4\epsilon (\phi(x))^2 dx\\
    & \geq \frac{8\epsilon}{\pi} \int_{1+2\epsilon}^{\sqrt{\log(T)}} (x^2+1)(1-(T+1)\Big(1-\frac{1}{\sqrt{2\pi}}\frac{x}{1+x^2}e^{-\frac{x^2}{2}}\Big)^T + T(\Phi(x))^{T+1}    ) dx\\
    &\geq \frac{8\epsilon}{\pi} \int_{1+2\epsilon}^{\sqrt{\log(T)}} (x^2+1) (1-(T+1)\Big(1-\frac{1}{\sqrt{2\pi}}\frac{x}{1+x^2}e^{-\frac{x^2}{2}}\Big)^T) dx .
    \end{aligned}
\end{equation}
The function $h(x)= 1-\frac{1}{\sqrt{2\pi}}\frac{x}{1+x^2}e^{-\frac{x^2}{2}} $ is increasing on the interval $(1, +\infty)$. We have
\begin{equation}
    \begin{aligned}
        (1-\frac{1}{\sqrt{2\pi}}\frac{x}{1+x^2}e^{-\frac{x^2}{2}}\Big)^T 
        &\leq (1-\frac{1}{\sqrt{2\pi}}\frac{\sqrt{\log(T)}}{1+\log(T)}\frac{1}{\sqrt{T}}\Big)^T \\
        &\leq e^{-\sqrt{\frac{T}{4\pi\log(T)}}}
    \end{aligned}
\end{equation}
Observe that for large $T$ ( $T \geq e^{11}$), $e^{-\sqrt{\frac{T}{4\pi\log(T)}}} \leq \frac{1}{T^2}$.
Therefore, for any $T\geq e^{11}$,
\[ M \geq \frac{8\epsilon}{\pi} \int_{1+2\epsilon}^{\sqrt{\log(T)}} (x^2+1) (1-\frac{T+1}{T^2}) dx \geq \frac{8\epsilon}{\pi} \int_{1+2\epsilon}^{\sqrt{\log(T)}} x^2 dx. \]
We have 
\[M \geq \frac{8\epsilon}{3\pi}\log(T)\sqrt{\log(T)}-\frac{8\epsilon(1+2\epsilon)^2}{3\pi}. \]

\section{Proofs of  Theorem \ref{bound3}}
Similar to the proofs of \cref{bound2}, the regret can also be divided into two parts:
\begin{equation}
    \label{temp9}
\begin{aligned}
\mathbb{E}\Bigg[\sum_{t=1}^{T}\sum_{i \in K(t)} \Delta_t(i)\mathbbm{1}\{i_t=i\} \Bigg]= \underbrace{\mathbb{E}\Bigg[ \sum_{t=1}^{T}\sum_{i\notin N_{\alpha_t^{*}}}\Delta_t(i)\mathbbm{1}\{i_t=i\}\Bigg]}_{(A)} + \underbrace{ \mathbb{E}\Bigg[ \sum_{t=1}^{T}\sum_{i\in N_{\alpha_t^{*}}}\Delta_i\mathbbm{1}\{i_t=i\}\Bigg]}_{(B)}
\end{aligned}
\end{equation}
Part $(A)$ is exactly the same as the analysis in the corresponding part of \cref{bound2}. We only focus part $(B)$.

Let 
\[  L'= \sum_{t=1}^{T}\mathbbm{1}\{ \mu(a_t)>\mu(i_t^{*}) \} \]
denote the  number of times the optimal arms changes.
Then 
\[ \mathbb{E}[L'] = L= \sum_{t=1}^{T}\mathbb{P}\{ \mu(a_t)>\mu(i_t^{*})\}.\]
Using \cref{chernoff}, we can get
\begin{equation}
    \label{temp11}
    \mathbb{P}(L' \geq 3L) \leq e^{-L}.
\end{equation}

Since $ \mu(a_t) \sim \mathcal{P} $ is independent of each other, each $\mu(a_t)$ is equally likely to be the largest one, i.e., $ \mathbb{P}( \mu(a_t)>\mu(i_t^{*})) = \frac{1}{t}.$ Or we can obtain this result through \cref{temp10}.
We have 
\begin{equation}
    \label{temp12}
    L = \sum_{t=1}^{T} \frac{1}{t}=\log(T)+c+o(1),
\end{equation}
where $c \approx 0.577$ is the Euler constant. Then $\log(T) \leq L \leq \log(T)+1$.

Given a fixed instance $\mathcal{F}$, we also divide the rounds into $L'$ parts: $(t_0=1,t_{L'+1}=T)$
\[[1,t_1],(t_1,t_2],...,(t_{L'},T].\]  
This partition satisfies $\forall t \in (t_j,t_{j+1}),$
$i_t^{*}$ is stationary. The $\alpha_t^{*}$ is also stationary, $\forall t \in (t_j,t_{j+1})$, let $\alpha_t^{*}=\alpha_j$. 

Let's focus on the intervals $(t_j,t_{j+1}]$, the analysis for other intervals is similar. 
All arms falling into $N_{\alpha_j}$ at rounds $(t_j,t_{j+1})$ are denoted by $V_j$.
The arms in $V_j$ can be divided into two parts: $E_1=\{i \in V_j,\mu(i) <\mu(\alpha_j) \}$ and $E_2=\{i \in V_j,\mu(i)\geq \mu(\alpha_j)\}$. If $i \in V_j$ has not been arrived at round $t$, $\mathbbm{1}\{ i_t=i\}=0$.
Then we have
\begin{equation}
    \sum_{i \in V_j}\sum_{t=t_j}^{t_{j+1}}\mathbbm{1}\{ i_t = i \} = \underbrace{ \sum_{i \in E_1}\sum_{t=t_j}^{t_{j+1}}\mathbbm{1}\{ i_t = i \} }_{(C)} + \underbrace{ \sum_{i \in E_2}\sum_{t=t_j}^{t_{j+1}}\mathbbm{1}\{ i_t = i \} }_{(D)}
\end{equation}
From the way our algorithm constructs independent sets, it can be inferred that all arms in $V_j$ are connected to $\alpha_t^{*}=\alpha_j$, and the distances are all less than $\epsilon$. Hence, both $E_1$ and $E_2$ form a clique. 

Note that selecting any arm in $E_1$ will result in the observation of $\alpha_j$.
We have
\begin{equation}
    \begin{aligned}
    (C)=\sum_{i \in  E_1}\sum_{t=t_j}^{t_{j+1}}\mathbbm{1}\{ i_t = i \} &\leq \ell_{\alpha_j} + \sum_{i \in E_1}\sum_{t=t_j}^{t_{j+1}}\mathbbm{1}\{ i_t=i, O_t(\alpha_j) > \ell_{\alpha_j} \}\\
    &\leq  \ell_{\alpha_j} + \sum_{i \in E_1}\sum_{t=t_j}^{t_{j+1}}\mathbbm{1}\{ \bar{\mu}_t(i) -c_t(i) > \bar{\mu}_t(\alpha_j) -c_t(\alpha_j),O_t(\alpha_j) \geq \ell_{\alpha_j} \} \\
    &\leq  \ell_{\alpha_j} + \sum_{i \in E_1}\sum_{t=t_j}^{t_{j+1}}\mathbbm{1}\{ \max_{1\leq s_i \leq t} \bar{X}_{s_i}(i) - c_{s_i}(i) > \min_{ \ell_{\alpha_j}\leq s \leq t}\bar{X}_s(\alpha_j) - c_s(\alpha_j) \} \\
    &\leq  \ell_{\alpha_j}+ \sum_{i \in E_1}\sum_{t=t_j}^{t_{j+1}} \sum_{s_i=1}^{t} \sum_{s=\ell_{\alpha_j}}^{t}\mathbbm{1}\{ \bar{X}_{s_i}(i) -c_{s_i}(i) >  \bar{X}_s(\alpha_j) -c_s(\alpha_j) \}
    \end{aligned}
\end{equation}
Let $\ell(\alpha_j)=\frac{4\log(\sqrt{2}T/\delta)}{(\Delta_{min}^T)^2}$. If $s \geq \ell(\alpha_j)$, the event $\{ \mu(\alpha_j)-\mu(i) \leq 2c_{s_i}(i) \}$ never occurs. Then 
\[ \{ \bar{X}_{s_i}(i) -c_{s_i}(i) >  \bar{X}_s(\alpha_j) -c_s(\alpha_j) \} \subset  \{ \bar{X}_{s_i}(i) > \mu(i) + c_{s_i}(i) \} \bigcup \{ \bar{X}_s(\alpha_j) < \mu(\alpha_j) -c_s(\alpha_j) \}. \]

From \cref{lemma1}, we have
\[ \mathbb{P}(\bar{X}_{s_i}(i) -c_{s_i}(i) >  \bar{X}_s(\alpha_j) -c_s(\alpha_j))  \leq \frac{\delta^2}{T^2}.\]
 
The regret incurred by $E_1$ in $(t_j,t_{j+1}]$ is at most
\[ \frac{8\epsilon\log(\sqrt{2}T/\delta)}{(\Delta_{min}^T)^2} + 2\epsilon (t_{j+1}-t_j)|E_1|\delta^2. \]

Using the same method, we can get the regret  incurred by $E_2$ in $(t_j,t_{j+1}]$ is bounded as
\[ \frac{8\epsilon\log(\sqrt{2}T/\delta)}{(\Delta_{min}^T)^2} + 2\epsilon (t_{j+1}-t_j)|E_2|\delta^2. \]

Therefore, choosing $\delta=\frac{1}{T}$, $(B)$ with the fixed $\mathcal{F}$ is bounded as  
\begin{equation}
    \frac{16L'\epsilon\log(\sqrt{2}T/\delta)}{(\Delta_{min}^T)^2} + 2\epsilon.
\end{equation}

From \cref{temp11} and \cref{temp12}, we have 
\[ (B) \leq \frac{48 L\epsilon\log(\sqrt{2}T/\delta)}{(\Delta_{min}^T)^2} + 2\epsilon + 2\epsilon T e^{-L} \leq \frac{48L\epsilon\log(\sqrt{2}T/\delta)}{(\Delta_{min}^T)^2} + 4\epsilon \]

Therefore, we get the total regret
\[ R_T^{\pi} \leq 5\log_bT\Delta_{max}( \frac{4\log(\sqrt{2}T/\delta)}{\epsilon^2} + T\delta^2 ) + \Delta_{max}+  \frac{48L\epsilon\log(\sqrt{2}T/\delta)}{(\Delta_{min}^T)^2} + 4\epsilon.\]

\end{document}